\DeclareMathOperator*{\argmin}{argmin}
\newcommand{\Ell}{\mathcal{L}}
\newcommand{\mb}{\mathbf}
\newcommand{\R}{\mathbb{R}}
\newcommand{\indfun}[1]{\ensuremath{\mb{1}_{\{#1\}}}}
\newtheorem{theorem}{Theorem}[section]
\newtheorem{lemma}[theorem]{Lemma}
\def\tr{\mathop{\rm tr}\nolimits}
\def\argmin{\mathop{\rm argmin}\nolimits}
\def\vecop{\mathop{\rm vec}\nolimits}
\newcommand{\amp}{\mathop{\:\:\,}\nolimits}
\newcommand{\Real}{\mathbb{R}}
\newcommand{\bu}{\boldsymbol{u}}
\newcommand{\bv}{\boldsymbol{v}}
\newcommand{\bx}{\boldsymbol{x}}
\newcommand{\by}{\boldsymbol{y}}
\newcommand{\bz}{\boldsymbol{z}}
\newcommand{\bB}{\boldsymbol{B}}
\newcommand{\bH}{\boldsymbol{H}}
\newcommand{\bI}{\boldsymbol{I}}
\newcommand{\bU}{\boldsymbol{U}}
\newcommand{\bV}{\boldsymbol{V}}
\newcommand{\bX}{\boldsymbol{X}}
\newcommand{\bY}{\boldsymbol{Y}}
\newcommand{\bZ}{\boldsymbol{Z}}
\newcommand{\bbeta}{\boldsymbol{\beta}}
\title{Generalized Linear Model Regression under Distance-to-set Penalties}
\author{ 
{\bf Jason Xu} \\
University of California, Los Angeles \\
\href{mailto:jqxu@ucla.edu}{jqxu@ucla.edu} \\
\And
{\bf Eric C. Chi}  \\
North Carolina State University \\
\href{mailto:eric\_chi@ncsu.edu}{eric\_chi@ncsu.edu} \\
\And
{ \bf Kenneth Lange} \\
University of California, Los Angeles \\
\href{mailto:klange@ucla.edu}{klange@ucla.edu} \\
}
\begin{document}

\maketitle
\begin{abstract} 
Estimation in generalized linear models (GLM) is complicated by the presence of constraints. One can handle constraints by maximizing a penalized log-likelihood. Penalties such as the lasso are effective in high dimensions, but often lead to unwanted shrinkage. This paper explores instead penalizing the squared distance to constraint sets. Distance penalties are more flexible than algebraic and regularization penalties, and avoid the drawback of shrinkage. To optimize distance penalized objectives, we make use of the majorization-minimization principle. Resulting algorithms constructed within this framework are amenable to acceleration and come with global convergence guarantees. Applications to shape constraints, sparse regression, and rank-restricted matrix regression on synthetic and real data showcase strong empirical performance, even under non-convex constraints. 
\end{abstract} 

\section{Introduction and Background}
\label{Intro}

In classical linear regression, the response variable $y$ follows a Gaussian distribution whose mean $\bx^t\bbeta$ depends linearly on a parameter vector 
$\bbeta$ through a vector of predictors $\bx$. Generalized linear models (GLMs) extend classical linear regression by allowing $y$ to follow any exponential family distribution, and the conditional mean of $y$ to be a nonlinear function 
$h(\bx^t\bbeta)$ of $\bx^t\bbeta$ \cite{mccullagh1989}. This encompasses a broad class of important models in statistics and machine learning. For instance, count data and binary classification come within the purview of generalized linear regression. 

In many settings, it is desirable to impose constraints on the regression coefficients. Sparse regression is a prominent example. In high-dimensional problems where the number of predictors $n$ exceeds the number of cases $m$, inference is possible provided the regression function lies in a low-dimensional manifold \cite{fan2010}. In this case, the coefficient vector 
$\bbeta$ is sparse, and just a few predictors explain the response $y$.
The goals of sparse regression are to correctly identify the relevant predictors and to estimate their effect sizes. One approach, best subset regression, is known to be NP hard. Penalizing the likelihood by including an $\ell_0$ penalty $\|\bbeta\|_0$ (the number of nonzero coefficients) is a possibility, but the resulting objective function is nonconvex and discontinuous. The convex relaxation of $\ell_0$ regression replaces $\|\bbeta\|_0$ by the $\ell_1$ norm $\|\bbeta\|_1$. This LASSO proxy for $\|\bbeta\|_0$ restores convexity and continuity \cite{tibshirani1996}. While LASSO regression has been a great success, it has the downside of simultaneously inducing both sparsity and parameter shrinkage. Unfortunately, shrinkage often has the undesirable side effect of including spurious predictors (false positives) with the true predictors. 

Motivated by sparse regression, we now consider the alternative of penalizing the log-likelihood by the squared distance from the parameter vector $\bbeta$ to the constraint set. If there are several constraints, then we add a distance penalty for each constraint set. Our approach is closely related to the proximal distance algorithm \cite{lange2016,lange2015} and proximity function approaches to convex feasibility problems \cite{byrne2001}. Neither of these 
prior algorithm classes explicitly considers generalized linear models.
Beyond sparse regression, distance penalization applies to a wide class of
statistically relevant constraint sets, including isotonic constraints and 
matrix rank constraints. To maximize distance penalized log-likelihoods, we advocate the majorization-minimization (MM) principle \cite{BecYanLan1997,LanHunYan2000,lange2016}. MM algorithms are increasingly popular in solving the large-scale optimization problems arising in statistics and machine learning \cite{mairal2015}. Although distance penalization preserves convexity when it already exists, neither the objective function nor the constraints sets need be convex to carry out estimation. The capacity to project onto each
constraint set is necessary. Fortunately, many projection operators are known. Even in the absence of convexity, we are able to prove that our algorithm converges to a stationary point. In the presence of convexity, the stationary
points are global minima.  

In subsequent sections, we begin by briefly reviewing GLM regression and shrinkage penalties. We then present our distance penalty method and a sample of statistically relevant problems that it can address. Next we lay out in detail our distance penalized GLM algorithm, discuss how it
can be accelerated, summarize our convergence results, and compare its performance to that of competing methods on real and simulated data. We close with a summary and a discussion of future directions.

\paragraph{GLMs and Exponential Families:}
In linear regression, the vector of responses $\by$ is normally distributed
with mean vector $\mathbb{E}(\by) = \bX\bbeta$ and covariance matrix 
$\mathbb{V}(\by) = \sigma^2 \bI$. A GLM preserves the independence of the responses $y_i$ but assumes that they are generated from a shared exponential family distribution. The response $y_i$ is postulated to have mean 
$\mu_i(\bbeta) = \mathbb{E}[y_i | \bbeta] = h( \bx_i^t \bbeta)$, where $\bx_i$ is the $i$th row of a design matrix $\bX$, and the inverse link function $h(s)$ is smooth and strictly increasing \cite{mccullagh1989}. The functional inverse $h^{-1}(s)$ of $h(s)$ is called the link function. The likelihood of any exponential
family can be written in the \textit{canonical form}
\begin{equation}\label{eq:GLM}
p( y_i | \theta_i, \tau ) = c_1(y_i,\tau) \exp \left\{ \frac{ y \theta_i - \psi( \theta_i) }{c_2(\tau)} \right\}. 
\end{equation}
Here $\tau$ is a fixed scale parameter, and the positive functions $c_1$ and $c_2$ are constant with respect to the natural parameter $\theta_i$. The function $\psi$ is smooth and convex;  a brief calculation shows that 
$\mu_i = \psi'(\theta_i)$. The \textit{canonical link} function $h^{-1}(s)$ is defined by the condition $h^{-1}(\mu_i) = \bx_i^t \bbeta  = \theta_i $. In this case, 
$h(\theta_i) =\psi'(\theta_i)$, and the log-likelihood $\ln p(\by| \bbeta, \bx_j,\tau)$ is concave in $\bbeta$. Because $c_1$ and $c_2$ are not functions of $\theta$, we may drop these terms and work with the log-likelihood up to proportionality. We denote this by $\Ell(\bbeta \mid \by, \bX) \propto \ln p(\by| \bbeta, \bx_j,\tau)$. The gradient and second differential of $\Ell(\bbeta \mid \by, \bX)$ amount to
\begin{equation}\label{eq:score}
\nabla \Ell = \sum_{i=1}^m [ y_i - \psi'(\bx_i^t \bbeta) ] \bx_i \quad 
\text{and} \quad d^2 \Ell = - \sum_{i=1}^m \psi''(\bx_i^t\bbeta) \bx_i \bx_i^t.
\end{equation} 

As an example, when $\psi(\theta) = \theta^2/2$ and $c_2(\tau) = \tau^2$, the 
density (\ref{eq:GLM}) is the Gaussian likelihood, and GLM regression
under the identity link coincides with standard linear regression. Choosing 
$\psi(\theta) = \ln [1 + \exp(\theta)]$ and $c_2(\tau) = 1$ corresponds to logistic regression under the canonical link $h^{-1}(s) = \ln \frac{s}{1-s}$ with inverse link $h(s) = \frac{e^s}{1+e^s}$.  GLMs unify a range of regression settings, including Poisson, logistic, gamma, and multinomial regression. 

\paragraph{Shrinkage penalties:} 
The least absolute shrinkage and selection operator (LASSO) \cite{friedman2010,tibshirani1996} solves
\begin{equation}\label{eq:cs_noisy_l1}
 \hat{\bbeta}  = \argmin _{\bbeta} \Big[\lambda \lVert\bbeta\rVert_1 - \frac{1}{m}\sum_{j=1}^m \Ell( \bbeta \mid y_j, \bx_j )\Big] , 
 \end{equation} 
where $\lambda >0 $ is a tuning constant that controls the strength of the $\ell_1$ penalty. 
The $\ell_1$ relaxation is a popular approach to promote a sparse solution, but there is no obvious map between $\lambda$ and the sparsity level $k$. In
practice, a suitable value of $\lambda$ is found by cross-validation. 
Relying on global shrinkage towards zero, LASSO notoriously leads to biased estimates. This bias can be ameliorated by re-estimating under the model containing only the selected variables, known as the relaxed LASSO \cite{meinshausen2007}, but success of this two-stage procedure relies on correct support recovery in the first step. In many cases, LASSO shrinkage is known to introduce false positives \cite{su2017}, resulting in spurious covariates that cannot be corrected.
To combat these shortcomings, one may replace the LASSO penalty by a non-convex penalty with milder effects on large coefficients. The smoothly clipped absolute deviation (SCAD) penalty \cite{fan2001} and minimax concave penalty (MCP) \cite{zhang2010} are 
even functions defined through their derivatives
\begin{equation*}
q_\gamma'(\beta_i, \lambda) = \lambda \left[ \indfun{|\beta_i|\leq\lambda} + \frac{(\gamma \lambda - |\beta_i|)_+}{(\gamma-1)\lambda} \indfun{|\beta_i|>\lambda} \right] \quad \text{and} \quad
q_\gamma'(\beta_i, \lambda) = \lambda \left( 1 - \frac{ |\beta_i| }{\lambda \gamma}\right)_+ 
\end{equation*}
for $\beta_i>0$. Both penalties reduce bias, interpolate between hard thresholding and LASSO shrinkage, and significantly outperform the LASSO in some settings, especially in problems with extreme sparsity. SCAD, MCP, as well as the relaxed lasso come with the disadvantage of requiring an extra tuning parameter $\gamma >0$ to be selected.  

\section{Regression with distance-to-constraint set penalties}
As an alternative to shrinkage, we consider penalizing the distance between the parameter vector $\bbeta$ and constraints defined by sets $C_i$. Penalized
estimation seeks the solution
\begin{equation}\label{eq:distpenalty}
 \hat{\bbeta}  = \argmin _{\bbeta} \left[  \frac{1}{2}\sum_{i} v_i \text{dist}(\bbeta, C_i)^2 - \frac{1}{m}\sum_{j=1}^m \Ell( \bbeta \mid y_j, \bx_j)
 \right] := \argmin _{\bbeta} f(\bbeta), 
 \end{equation}
where the $v_i$ are weights on the distance penalty to constraint set $C_i$ . The Euclidean distance can also be written as 
\begin{equation*}
\text{dist}(\bbeta, C_i) = \lVert \bbeta - P_{C_i}(\bbeta) \rVert_2,
 \end{equation*}
where $P_{C_i}(\bbeta)$ denotes the projection of $\bbeta$ onto $C_i$.
The projection operator is uniquely defined when $C_i$ is closed and
convex. If $C_i$ is merely closed, then $P_{C_i}(\bbeta)$ may be
multi-valued for a few unusual external points $\bbeta$. 
Notice the distance penalty $\text{dist}(\bbeta, C_i)^2$ is 0 precisely when 
$\bbeta \in C_i$. The solution \eqref{eq:distpenalty} represents a tradeoff between maximizing the log-likelihood and satisfying the constraints.
When each $C_i$ is convex, the objective function is convex as a whole. Sending all of the penalty constants $v_i$ to $\infty$ produces in the limit the constrained maximum likelihood estimate. This is the philosophy behind the proximal distance algorithm
\cite{lange2016,lange2015}. In practice, it often suffices to find the solution
\eqref{eq:distpenalty} under fixed $v_i$ large. The reader may wonder why
we employ squared distances rather than distances. The advantage is that
squaring renders the penalties differentiable. Indeed, 
$\nabla \frac{1}{2}\text{dist}(\bx, C_i)^2 = \bx - P_{C_i}(\bx)$ whenever $P_{C_i}(\bx)$ is single valued. This is almost always the case. In contrast,
$\text{dist}(\bx, C_i)$ is typically nondifferentiable at boundary points
of $C_i$ even when $C_i$ is convex. The following examples motivate distance penalization by considering constraint
sets and their projections for several important models.

\paragraph{\bf Sparse regression:}
Sparsity can be imposed directly through the constraint set
$ C_k = \left\{ \bz \in \Real^n : \lVert \bz \rVert_0 \leq k \right\}. $ 
Projecting a point $\bbeta$ onto $C$ is trivially accomplished by setting all but the $k$ largest entries in magnitude of $\bbeta$ equal to $0$, the same operation behind iterative hard thresholding algorithms.
Instead of solving the $\ell_1$-relaxation (\ref{eq:cs_noisy_l1}), our algorithm approximately solves the original $\ell_0$-constrained problem by repeatedly projecting onto the sparsity set $C_k$. Unlike LASSO regression, this strategy enables one to directly incorporate prior knowledge of the sparsity level $k$ in an interpretable manner. When no such information is available, $k$ can be selected by cross validation just as the LASSO tuning constant  $\lambda$ is selected. Distance penalization escapes the NP hard dilemma of best subset regression at the cost of possible convergence to a local minimum.

\paragraph{\bf Shape and order constraints:}
As an example of shape and order restrictions, consider isotonic regression \cite{barlow1972}.  For data $\by \in \R^n$, isotonic regression seeks to minimize $\frac{1}{2}\lVert \by - \bbeta \rVert_2^2 $ subject to the condition that the $\beta_i$ are non-decreasing. In this case, the relevant constraint set is the isotone convex cone 
$C = \left\{  \bbeta : \beta_1 \leq \beta_2 \leq \ldots \leq \beta_n \right\}$. 
Projection onto $C$ is straightforward and efficiently accomplished using the pooled adjacent violators algorithm \cite{barlow1972,chi2014}. More complicated order constraints can be imposed analogously: for instance, $\beta_i \leq \beta_j$ might be required of all edges $i\rightarrow j$ in a directed graph model. Notably, isotonic linear regression applies to changepoint problems \cite{wu2001}; our approach allows isotonic constraints in GLM estimation. One noteworthy application is Poisson regression where the intensity parameter is assumed to be nondecreasing with time. 

\paragraph{\bf Rank restriction:} 
Consider GLM regression where the predictors $\bX_i$ and regression coefficients $\bB$ are matrix-valued. To impose structure in high-dimensional settings, rank restriction serves as an appropriate matrix counterpart to sparsity for vector parameters. Prior work suggests that imposing matrix sparsity is much less effective than restricting the rank of $\bB$ in achieving model parsimony \cite{zhou2014}. The matrix analog of the LASSO penalty is the nuclear norm penalty. The nuclear norm of a matrix $\bB$ is defined as the sum of its singular values $\lVert \bB \rVert_* = \sum_j \sigma_j(\bB) = \text{trace}(\sqrt{\bB^* \bB} )$. Notice $\|\bB\|_*$ is a convex relaxation of $\text{rank}(\bB)$. Including a nuclear norm penalty entails shrinkage and induces low-rankness by proxy.

Distance penalization of rank involves projecting onto the set
$C_r = \left\{ \bZ \in \Real^{n\times n} : \text{rank}(\bZ) \leq r \right\}$
for a given rank $r$. Despite sacrificing convexity, distance penalization of rank 
is, in our view, both more natural and more effective than nuclear norm penalization.
Avoiding shrinkage works to the advantage of distance penalization, which we will see empirically in Section \ref{sec:results}. According to the Eckart-Young theorem, the projection of a matrix $\bB$ onto $C_r$
is achieved by extracting the singular value decomposition of $\bB$
and truncating all but the top $r$ singular values. Truncating the singular value decomposition is a standard numerical task best computed by Krylov subspace methods \cite{golub2012}. 

\paragraph{\bf Simple box constraints, hyperplanes, and balls:}
Many relevant set constraints reduce to closed convex sets with
trivial projections. For instance, enforcing non-negative parameter values is accomplished by projecting onto the non-negative orthant. This is an example of a box constraint. Specifying linear equality and inequality constraints entails projecting onto a hyperplane or half-space, respectively. A Tikhonov or ridge penalty constraint $\|\bbeta\|_2 \le r$ requires spherical projection. 

Finally, we stress that it is straightforward to consider combinations of the aforementioned constraints. Multiple norm penalties are already in common use.
To encourage selection of correlated variables \cite{zou2005}, the elastic net includes both $\ell_1$ and $\ell_2$ regularization terms. Further examples include matrix fitting subject to both sparse and low-rank matrix constraints  \cite{richard2012} and LASSO regression subject to linear equality and inequality constraints \cite{gaines2016}. In our setting the relative importance of different constraints can be controlled via the weights $v_i$.

\section{Majorization-minimization}\label{sec:MM}
\begin{figure}[hbbp]
\centering
\includegraphics[width = .65\textwidth]{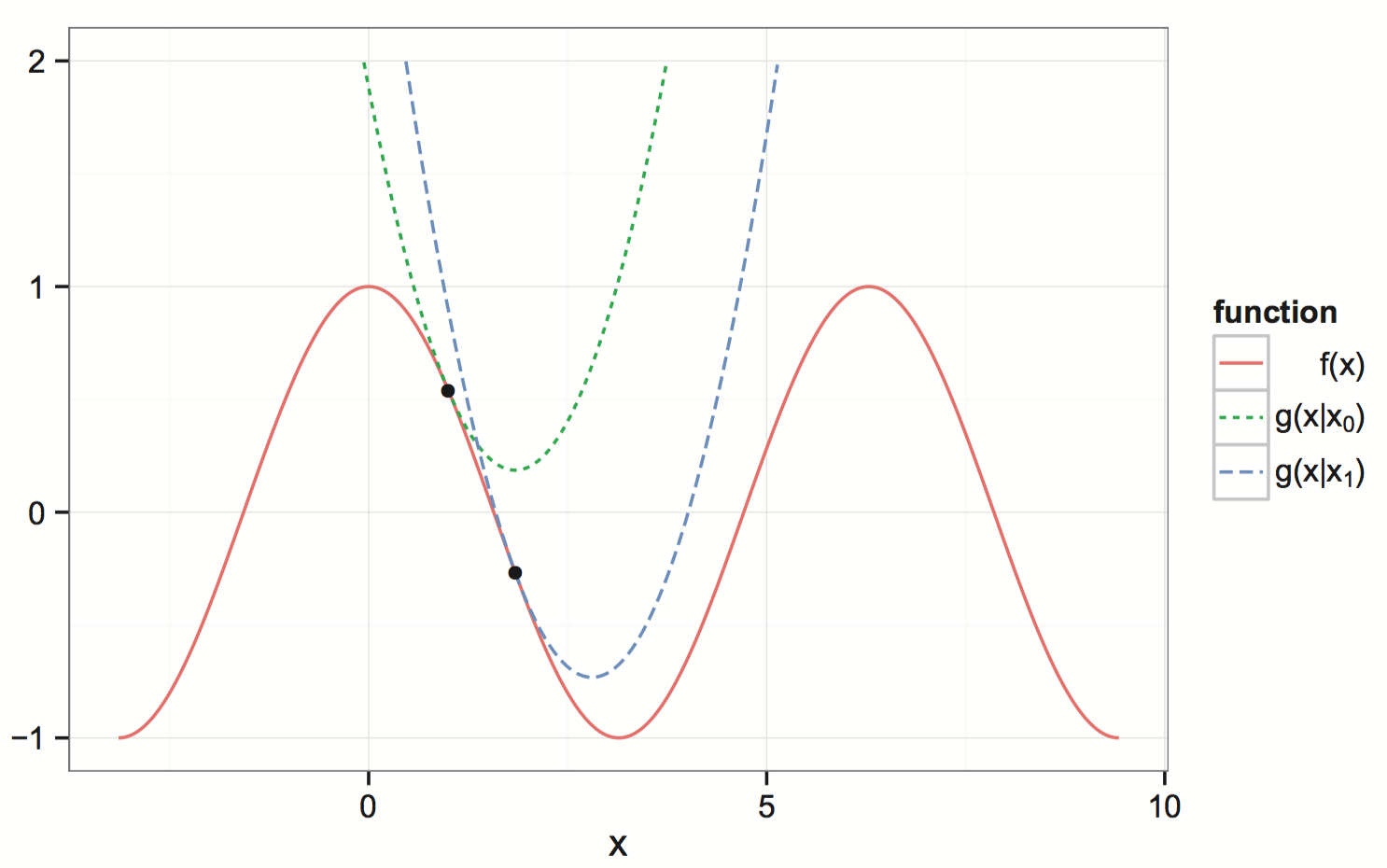}
\vspace{-3pt}
\caption{Illustrative example of two MM iterates with surrogates $g(x|x_k)$ majorizing $f(x) = \cos(x)$.} 
\vspace{-5pt}
\label{fig:MM}
\end{figure}
To solve the minimization problem \eqref{eq:distpenalty}, we exploit the principle of majorization-minimization. An MM algorithm successively minimizes a sequence of surrogate functions $g(\bbeta \mid \bbeta_k)$ majorizing the objective function $f(\bbeta)$ around the current iterate $\bbeta_k$. See Figure \ref{fig:MM}. Forcing $g(\bbeta \mid \bbeta_k)$ downhill automatically drives $f(\bbeta)$ downhill as well \cite{lange2016,mairal2015}. Every expectation-maximization (EM) algorithm \cite{dempster1977} for maximum likelihood estimation is an MM algorithm. Majorization requires two conditions: tangency at the current iterate $g(\bbeta_k \mid \bbeta_k) =  f(\bbeta_k)$, and domination $g(\bbeta \mid \bbeta_k)  \geq f(\bbeta)$ for all $\bbeta \in \Real^m$.  The iterates of the MM algorithm are defined by
\begin{equation*}
  \bbeta_{k+1} := \underset{\bbeta}{\arg \min}\; g(\bbeta \mid \bbeta_{k}) 
\end{equation*}
although all that is absolutely necessary is that 
$g(\bbeta_{k+1} \mid \bbeta_k ) < g(\bbeta_k \mid \bbeta_k)$. Whenever this holds, the descent property
\begin{equation*}
  f(\bbeta_{k+1}) \leq g(\bbeta_{k+1} \mid \bbeta_{k}) \leq g(\bbeta_{k} \mid \bbeta_{k}) = f(\bbeta_{k})
\end{equation*}
follows. This simple principle is widely applicable and converts many hard optimization problems (non-convex or non-smooth) into a sequence of simpler problems. 

To majorize the objective (\ref{eq:distpenalty}), it suffices to majorize each
distance penalty $\text{dist}\left(\bbeta, C_i\right)^2$. The majorization
$\text{dist}\left(\bbeta, C_i\right)^2 
 \leq \lVert \bbeta - P_{C_i}(\bbeta_k)\rVert _2^2$ is an immediate  consequence of the definitions of the set distance 
 $\text{dist}\left(\bbeta, C_i\right)^2$ and the projection operator 
 $P_{C_i}(\bbeta)$ \cite{chi2014}. The surrogate function
\begin{eqnarray*}
g(\bbeta \mid \bbeta_{k}) & = & \frac{1}{2}\sum_i v_i \lVert \bbeta - P_{C_i}(\bbeta_k)\rVert_2^2 - \frac{1}{m}\sum_{j=1}^m \Ell( \bbeta \mid y_j, \bx_j).
\end{eqnarray*}
has gradient
\begin{eqnarray*}
\nabla g(\bbeta \mid \bbeta_{k}) & = & \sum_i v_i [ \bbeta - P_{C_i}(\bbeta_k)] - \frac{1}{m}\sum_{j=1}^m \nabla \Ell(\bbeta \mid  y_j, \bx_j )
\end{eqnarray*}
and second differential
\begin{eqnarray}
d^2 g(\bbeta \mid \bbeta_{k}) & = & \Big(\sum_i v_i\Big) \bI_n
- \frac{1}{m}\sum_{j=1}^m d^2 \Ell(\bbeta \mid  y_j, \bx_j ) := \bH_k . \label{d2geq}
\end{eqnarray}
The score $\nabla \Ell(\bbeta \mid  y_j, \bx_j )$ and information
$-d^2 \Ell(\bbeta \mid  y_j, \bx_j )$ appear in equation (\ref{eq:score}).
Note that for GLMs under canonical link, the observed and expected information matrices coincide, and their common value is thus positive semidefinite. Adding a multiple of the identity 
$\bI_n$ to the information matrix is analogous to the Levenberg-Marquardt maneuver against ill-conditioning in ordinary regression \cite{more1978}. Our algorithm therefore naturally benefits from this safeguard.

Since solving the stationarity equation $\nabla g(\bbeta \mid \bbeta_{k}) = {\bf 0}$
is not analytically feasible in general, we employ one step of Newton's method in the 
form
\begin{eqnarray*}
\bbeta_{k+1} = \bbeta_k -  \eta_k d^2 g(\bbeta_k \mid \bbeta_{k})^{-1} 
\nabla f(\bbeta_{k}),
\end{eqnarray*}
where $\eta_k \in (0,1]$ is a stepsize multiplier chosen via backtracking. Note here our 
application of the gradient identity 
$\nabla f(\bbeta_{k})= \nabla g(\bbeta_k \mid \bbeta_{k})$, valid for
all smooth surrogate functions.
Because the Newton increment is a descent direction, some value
of $\eta_k$ is bound to produce a decrease in the surrogate and therefore in the objective. The following theorem, proved in the Supplement, establishes global convergence of our algorithm under simple Armijo backtracking for choosing $\eta_k$:
\begin{theorem}
\label{prop:convergence}
Consider the algorithm map
\begin{eqnarray*}
\mathcal{M}(\bbeta) = \bbeta - \eta_{\bbeta}\bH(\bbeta)^{-1} \nabla f(\bbeta),
\end{eqnarray*}
where the step size $\eta_{\bbeta}$ has been selected by Armijo backtracking. Assume that $f(\bbeta)$ is coercive in the sense $\lim_{\lVert \bbeta \rVert \rightarrow \infty} f(\bbeta) = + \infty$. Then the limit points of the sequence $\bbeta_{k+1} = \mathcal{M}(\bbeta_{k})$ are stationary points of $f(\bbeta)$. Moreover, the set of limit points is compact and connected.
\end{theorem}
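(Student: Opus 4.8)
The plan is to treat this as a classical global-convergence statement for a descent method with Armijo line search, assembled from three ingredients: confinement of the iterates to a compact set, uniform control of the Newton directions on that set coming from positive definiteness of $\bH$, and the sufficient-decrease guarantee of backtracking. First I would exploit coercivity and the descent property. Because each accepted step produces a decrease (via the MM chain $f(\bbeta_{k+1})\le g(\bbeta_{k+1}\mid\bbeta_k)\le g(\bbeta_k\mid\bbeta_k)=f(\bbeta_k)$ together with the Armijo rule), the values $f(\bbeta_k)$ are nonincreasing, so every iterate lies in the sublevel set $S=\{\bbeta:f(\bbeta)\le f(\bbeta_0)\}$. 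Since $f$ is continuous and coercive, $S$ is closed and bounded, hence compact; thus $\{\bbeta_k\}$ is bounded, has limit points (all in $S$), and $f(\bbeta_k)\downarrow f^\star$ for some finite $f^\star$.

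Next I would extract uniform estimates on $S$. The text's observation that the observed and expected information coincide under the canonical link gives $-d^2\Ell\succeq 0$, so $\bH(\bbeta)=(\sum_i v_i)\bI_n-\frac1m\sum_j d^2\Ell\succeq(\sum_i v_i)\bI_n\succ0$ everywhere, while continuity of $\bbeta\mapsto\bH(\bbeta)$ on the compact set $S$ furnishes a uniform upper eigenvalue bound there. Writing $\bd_\bbeta=-\bH(\bbeta)^{-1}\nabla f(\bbeta)$, these bounds make the Newton direction uniformly gradient-related on $S$: there exist constants $c_1,c_2>0$ with $\nabla f(\bbeta)^t\bd_\bbeta\le-c_1\lVert\nabla f(\bbeta)\rVert^2$ and $\lVert\bd_\bbeta\rVert\le c_2\lVert\nabla f(\bbeta)\rVert$. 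A standard first-order Taylor argument (using $f\in C^1$ with $\nabla f$ Lipschitz on $S$, together with $\alpha<1$) then shows the backtracking loop terminates in finitely many steps, so $\mathcal{M}$ is well defined, and that the accepted $\eta_{\bbeta_k}$ cannot shrink to zero while $\lVert\nabla f(\bbeta_k)\rVert$ stays bounded away from zero. Feeding this into the Armijo inequality $f(\bbeta_{k+1})\le f(\bbeta_k)+\alpha\eta_k\,\nabla f(\bbeta_k)^t\bd_{\bbeta_k}$ and using $f(\bbeta_k)\downarrow f^\star$ yields a Zoutendijk-type summability $\sum_k\lVert\nabla f(\bbeta_k)\rVert^2<\infty$; in particular $\nabla f(\bbeta_k)\to0$, and since $\eta_k\le1$ also $\lVert\bbeta_{k+1}-\bbeta_k\rVert=\eta_k\lVert\bd_{\bbeta_k}\rVert\le c_2\lVert\nabla f(\bbeta_k)\rVert\to0$.

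Finally I would close the argument. If $\bbeta^\star$ is any limit point, choose $\bbeta_{k_j}\to\bbeta^\star$; continuity of $\nabla f$ gives $\nabla f(\bbeta^\star)=\lim_j\nabla f(\bbeta_{k_j})=0$, so $\bbeta^\star$ is stationary. The set of limit points is a closed subset of the compact set $S$, hence compact; and since $\{\bbeta_k\}$ is bounded with $\lVert\bbeta_{k+1}-\bbeta_k\rVert\to0$, Ostrowski's theorem on the cluster set of such a sequence gives connectedness. I expect the main obstacle to be the middle step: converting the backtracking rule into a summable sufficient-decrease bound. This requires the uniform two-sided eigenvalue control of $\bH$ over $S$, a Lipschitz estimate for $\nabla f$ there (the only place where the non-convex constraint sets need a remark, since it rests on good behavior of the projections $P_{C_i}$), and a case split according to whether the accepted step sizes remain bounded below. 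By contrast, the confinement to $S$ and the compact/connected conclusion are short once $\lVert\bbeta_{k+1}-\bbeta_k\rVert\to0$ is in hand.
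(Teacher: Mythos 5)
Your proposal is correct and follows essentially the same route as the paper's own argument: coercivity gives a compact sublevel set on which $\bH$ and $\nabla f$ are uniformly controlled, a Lipschitz-gradient estimate shows Armijo backtracking terminates with a step size bounded below, the resulting sufficient-decrease inequality forces $\lVert \nabla f(\bbeta_k)\rVert \to 0$, and then $\lVert \bbeta_{k+1}-\bbeta_k\rVert \to 0$ plus boundedness yields a compact, connected set of stationary limit points (the paper invokes Propositions 12.4.2--12.4.3 of Lange, i.e.\ the same Ostrowski-type result you cite). The only cosmetic differences are your Zoutendijk-style summability versus the paper's direct telescoping bound, neither of which changes the substance.
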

We remark that stationary points are necessarily global minimizers when 
$f(\bbeta)$ is convex. Furthermore, coercivity of $f(\bbeta)$ is a very mild assumption, and is satisfied whenever either the distance penalty or the negative log-likelihood is coercive. For instance, the negative log-likelihoods of the Poisson and Gaussian distributions are coercive functions. While this is not the case for the Bernoulli distribution, adding a small $\ell_2$ penalty $\omega \lVert \bbeta \rVert_2^2$ restores coerciveness. Including such a penalty in logistic regression is a common remedy to the well-known problem of numerical instability in parameter estimates caused by a poorly conditioned design matrix $\bX$ \cite{park2007}. Since $\Ell(\bbeta)$ is concave in $\bbeta$, the compactness of one or more of the constraint sets $C_i$ is another sufficient condition for coerciveness.

\paragraph{Generalization to Bregman divergences:} Although we have focused on penalizing GLM likelihoods with Euclidean distance penalties, this approach holds more generally for objectives containing non-Euclidean measures of distance. As reviewed in the Supplement, the \textit{Bregman divergence} 
$D_\phi(\bv, \bu) = \phi(\bv) - \phi(\bu)- d\phi(\bu)(\bv - \bu)$ generated by a convex function $\phi(\bv)$ provides a general notion of directed distance \cite{bregman1967}. The Bregman divergence associated with the choice $\phi(\bv) =\frac{1}{2} \lVert \bv \rVert_2 ^2$, for instance, is the squared Euclidean distance. One can rewrite the GLM penalized likelihood as a sum of multiple Bregman divergences
\begin{equation}
\label{eq:prox_glm}
f(\bbeta)  =  \sum_i v_i D_\phi \Big[\mathcal{P}^\phi_{C_i}(\bbeta) , \bbeta \Big] + \sum_{j=1}^m  w_j D_\zeta \Big[ \by_j , \widetilde{h}_j(\bbeta) \Big].
\end{equation}
The first sum in equation \eqref{eq:prox_glm} represents the distance penalty to the constraint sets $C_i$. The projection $\mathcal{P}^\phi_{C_i}(\bbeta)$ denotes the closest point to $\bbeta$ in $C_i$ measured under $D_\phi$.
The second sum generalizes the GLM log-likelihood term where $\widetilde{h}_j(\bbeta) = h^{-1}(\bx_j^t \bbeta)$. Every exponential family likelihood uniquely corresponds to a Bregman divergence $D_\zeta$ generated by the conjugate of its cumulant function $\zeta = \psi^*$ \cite{polson2015}. Hence, $-\Ell(\bbeta \mid \by, \bX)$ is proportional to $\frac{1}{m}\sum_{j=1}^m D_\zeta \left[ \by_j , h^{-1}( \bx_j^t\bbeta) \right]$. The functional form \eqref{eq:prox_glm} immediately broadens the class of objectives to include quasi-likelihoods and distances to constraint sets measured under a broad range of divergences. Objective functions of this form are closely related to proximity function minimization in the convex feasibility literature \cite{byrne2001,CenElf1994, censor2005, xu2016}. The MM principle makes possible the extension of the projection algorithms of \cite{censor2005} to minimize this general objective. 
 
Our MM algorithm for distance penalized GLM regression is summarized in Algorithm~\ref{alg:breg}. Although for the sake of clarity the algorithm is written for vector-valued arguments, it holds more generally for matrix-variate regression. In this setting the regression coefficients $\bB$ and predictors 
$\bX_i$ are matrix valued, and response $y_j$ has mean 
$h[\text{trace}(\bX_i^t \bB)]= h[\vecop(\bX_i)^t \vecop(\bB)]$. 
Here the $\vecop$ operator stacks the
columns of its matrix argument. Thus, the algorithm immediately applies if we replace $\bB$ by $\vecop(\bB)$ and 
$\bX_1,\ldots,\bX_m$ by $\bX = [\vecop(\bX_1), \ldots, \text{vec}(\bX_m)]^t$. Projections requiring the matrix structure are performed by reshaping $\vecop(\bB)$ into matrix form. In contrast to shrinkage approaches, these maneuvers obviate the need for new algorithms in matrix regression \cite{zhou2014}. 

\begin{algorithm}[t]
  \caption{MM algorithm to solve distance-penalized objective \eqref{eq:distpenalty}
}   \label{alg:breg}  
  \begin{algorithmic}[1]
  	\State Initialize $k=0$, starting point $\bbeta_0$, initial step size $\alpha \in (0, 1),$ and halving parameter $\sigma \in (0,1)$:
	\Repeat
	\State $\nabla f_k \gets \sum_i v_i [ \bbeta - P_{C_i}(\bbeta_k)] - \frac{1}{m}\sum_{j=1}^m \nabla \Ell(\bbeta \mid  y_j, \bbeta_j )$
	\State $\bH_{k} \gets \Big(\sum_i v_i\Big) \bI_n - \frac{1}{m}\sum_{j=1}^m d^2 \Ell(\bbeta \mid  y_j, \bbeta_j )$	
	\State $\bv \gets - \bH_{k}^{-1}\nabla f_k$
	\State $\eta \gets 1$
	\While{$f(\bbeta_{k} + \eta\bv) > f(\bbeta_{k}) + \alpha \eta \nabla f_k^t\bbeta_{k}$}
		\State $\eta \gets \sigma \eta$ 
	\EndWhile
	\State $\bbeta_{k+1} \gets \bbeta_{k} + \eta\bv$
	 \State $k \leftarrow k+1$
	 \Until{convergence}
  \end{algorithmic}
\end{algorithm}

\paragraph{Acceleration:} Here we mention two modifications to the MM algorithm that translate to large practical differences in computational cost. 
Inverting the $n$-by-$n$ matrix $d^2 g(\bbeta_k \mid \bbeta_{k})$ naively requires $\mathcal{O}(n^3)$ flops. When the number of cases $m \ll n$, invoking the Woodbury formula requires solving a substantially smaller $m \times m$ linear system at each iteration. This
computational savings is crucial in the analysis of the EEG data of Section \ref{sec:results}.  The Woodbury formula says 
\begin{equation*}
(v \bI_{n} + \bU \bV)^{-1} = v^{-1}\bI_{n} - v^{-2}\bU \big(\bI_{m} +v^{-1}\bV\bU\big)^{-1}\bV
\end{equation*}
when $\bU$ and $\bV$ are $n \times m$ and $m \times n$ matrices, 
respectively. Inspection of equations (\ref{eq:score}) and (\ref{d2geq})
shows that $d^2 g(\bbeta_k \mid \bbeta_{k})$
takes the required form. Under Woodbury's formula the dominant computation is the matrix-matrix product $\bV\bU$, which requires only $\mathcal{O}(nm^2)$ flops. The second modification to the MM algorithm is quasi-Newton acceleration. This technique exploits secant approximations derived from iterates of the algorithm map to approximate the differential of the map. As few as two secant approximations can lead to orders of magnitude reduction in the number of iterations until convergence. We refer the reader to \cite{zhou2011} for a detailed description of quasi-Newton acceleration and a summary of its performance on various high-dimensional problems.

\section{Results and performance}\label{sec:results}
\begin{figure}[htbp]
\centering
\includegraphics[width = .44\textwidth]{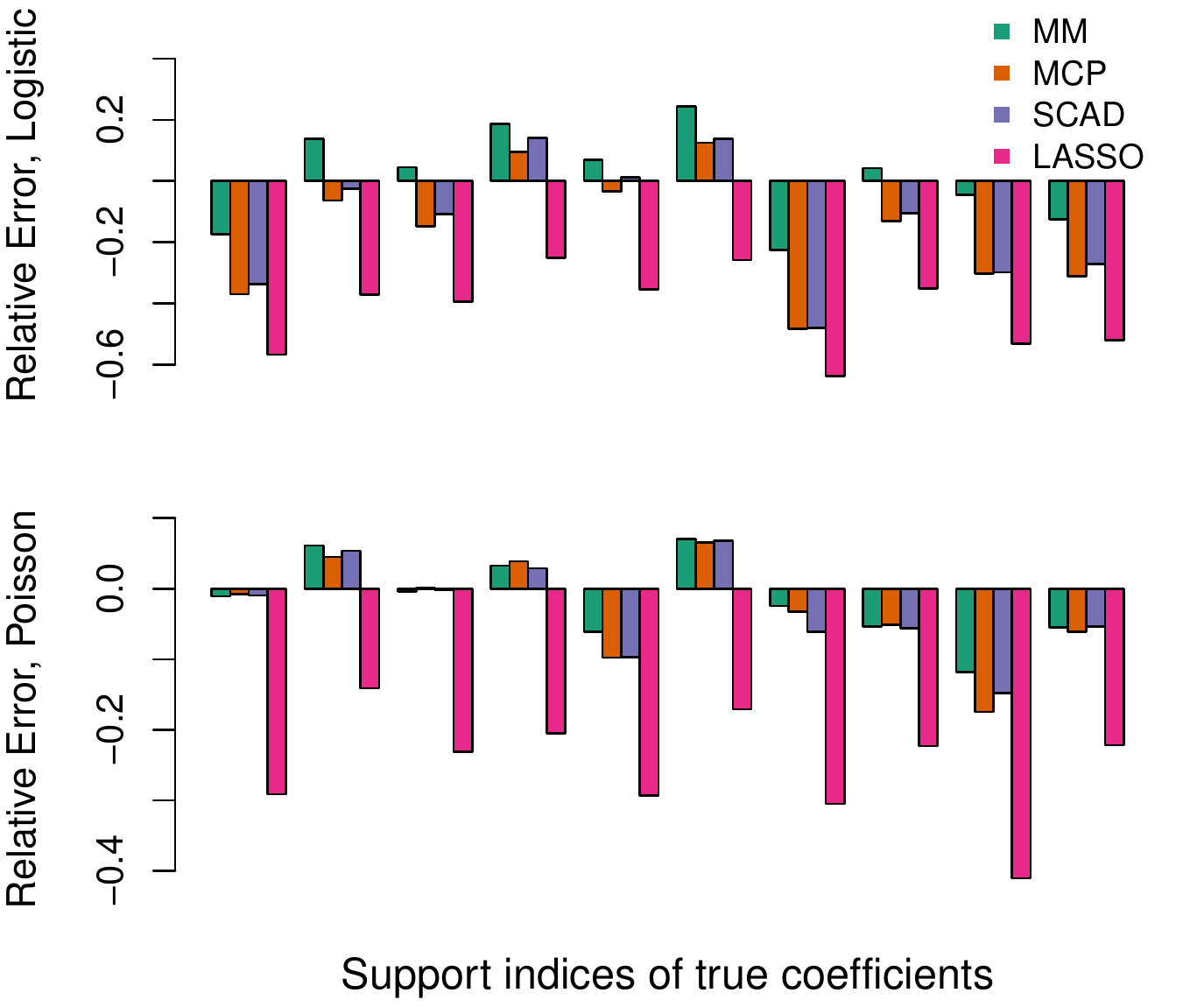}
\includegraphics[width = .51\textwidth]{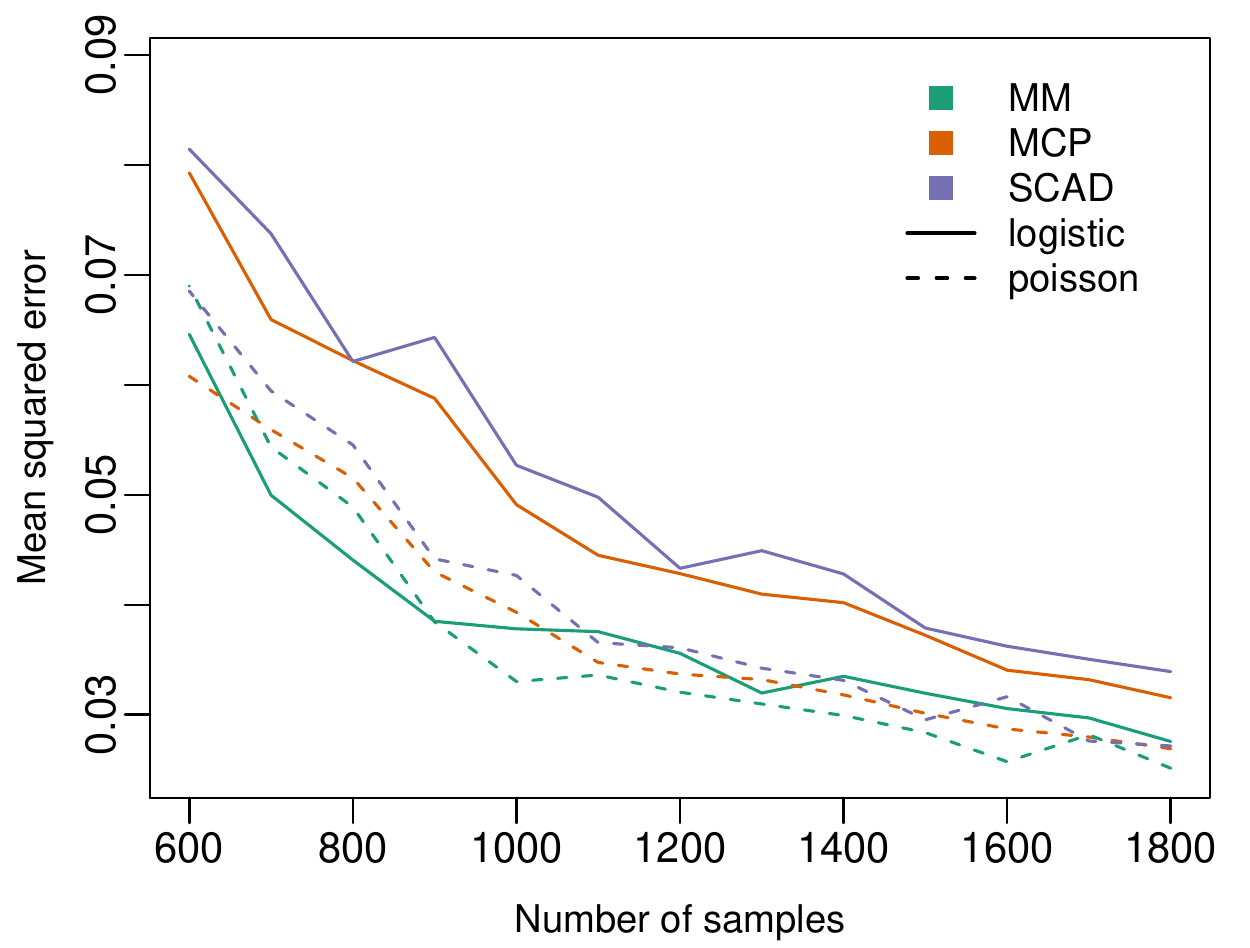}
\caption{The left figure displays relative errors among nonzero predictors in underdetermined Poisson and logistic regression with $m=1000$ cases. It is clear that LASSO suffers the most shrinkage and bias, while MM appears to outperform MCP and SCAD. The right figure displays MSE as a function of $m$, favoring MM most notably for logistic regression.} 
\label{fig:poisson}
\end{figure}
We first compare the performance of our distance penalization method to leading shrinkage methods in sparse regression. Our simulations involve a sparse length $n=2000$ coefficient vector $\bbeta$ with $10$ nonzero
entries. Nonzero coefficients have uniformly random effect sizes. The entries of the design matrix $\bX$ are $N(0,0.1)$ Gaussian random deviates.
We then recover $\bbeta$ from undersampled responses $y_j$ following
Poisson and Bernoulli distributions with canonical links. Figure \ref{fig:poisson} compares solutions obtained using our distance penalties
(MM) to those obtained under MCP, SCAD, and LASSO penalties. Relative errors (left) with $m=1000$ cases clearly show that LASSO suffers the most shrinkage and bias;  MM seems to outperform MCP and SCAD. For a more detailed comparison, the right side of the figure plots mean squared error (MSE) as a function of the number of cases averaged over $50$ trials. All methods significantly outperform LASSO, which is omitted for scale, with MM achieving lower MSE than competitors, most noticeably in logistic regression. As suggested by an anonymous reviewer, similar results from additional experiments for Gaussian (linear) regression with comparison to relaxed lasso are included in the Supplement. 

\begin{figure}[h]
\centering
\hspace{-10pt}
\subfigure[Sparsity constraint]{ 
\includegraphics[width = .251\textwidth]{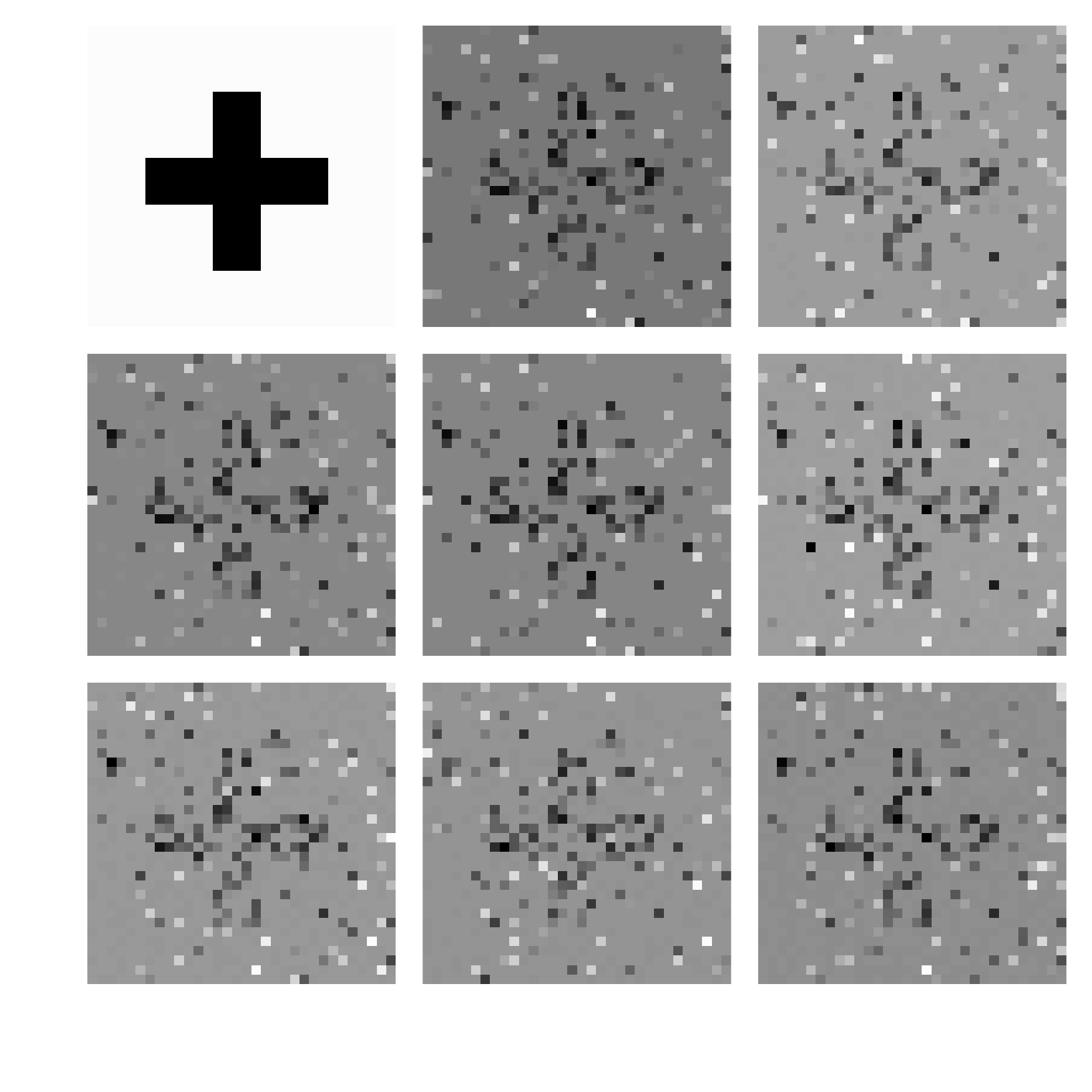}
\label{fig:matspar}
} \hspace{-11pt}
\subfigure[Regularize $\lVert \bX \rVert_*$ ]{ 
\includegraphics[width = .252\textwidth]{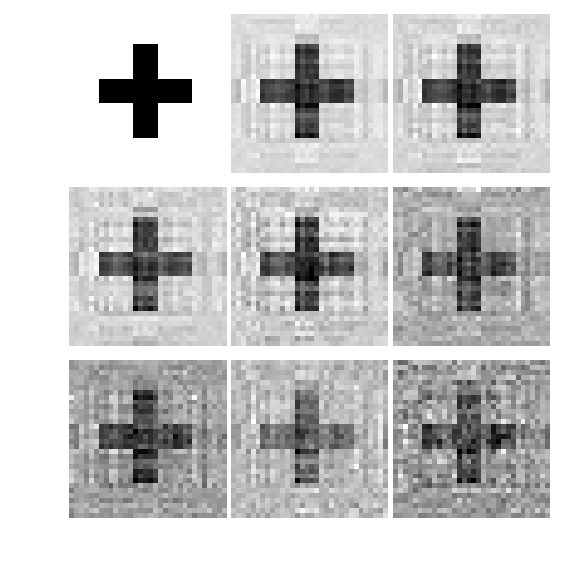} 
\label{fig:matrank}
} \hspace{-10pt}
\subfigure[Restrict $\text{rk}(\bX)=2$]{ 
\includegraphics[width = .2515\textwidth]{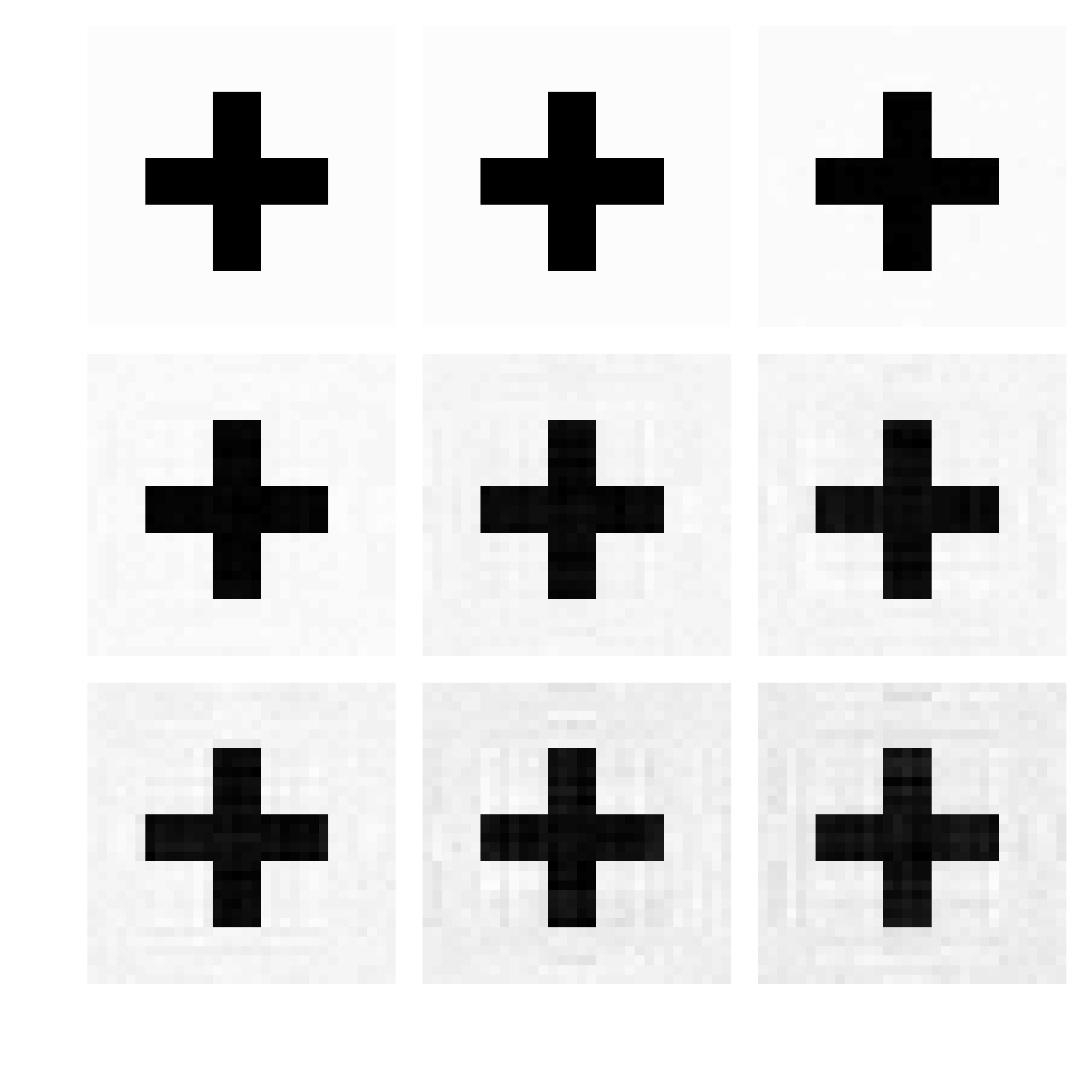} 
\label{fig:matnoise} 
} \hspace{-10pt}
\subfigure[Vary $\text{rk}(\bX)=1, \ldots, 8$]{
\includegraphics[width = .2515\textwidth]{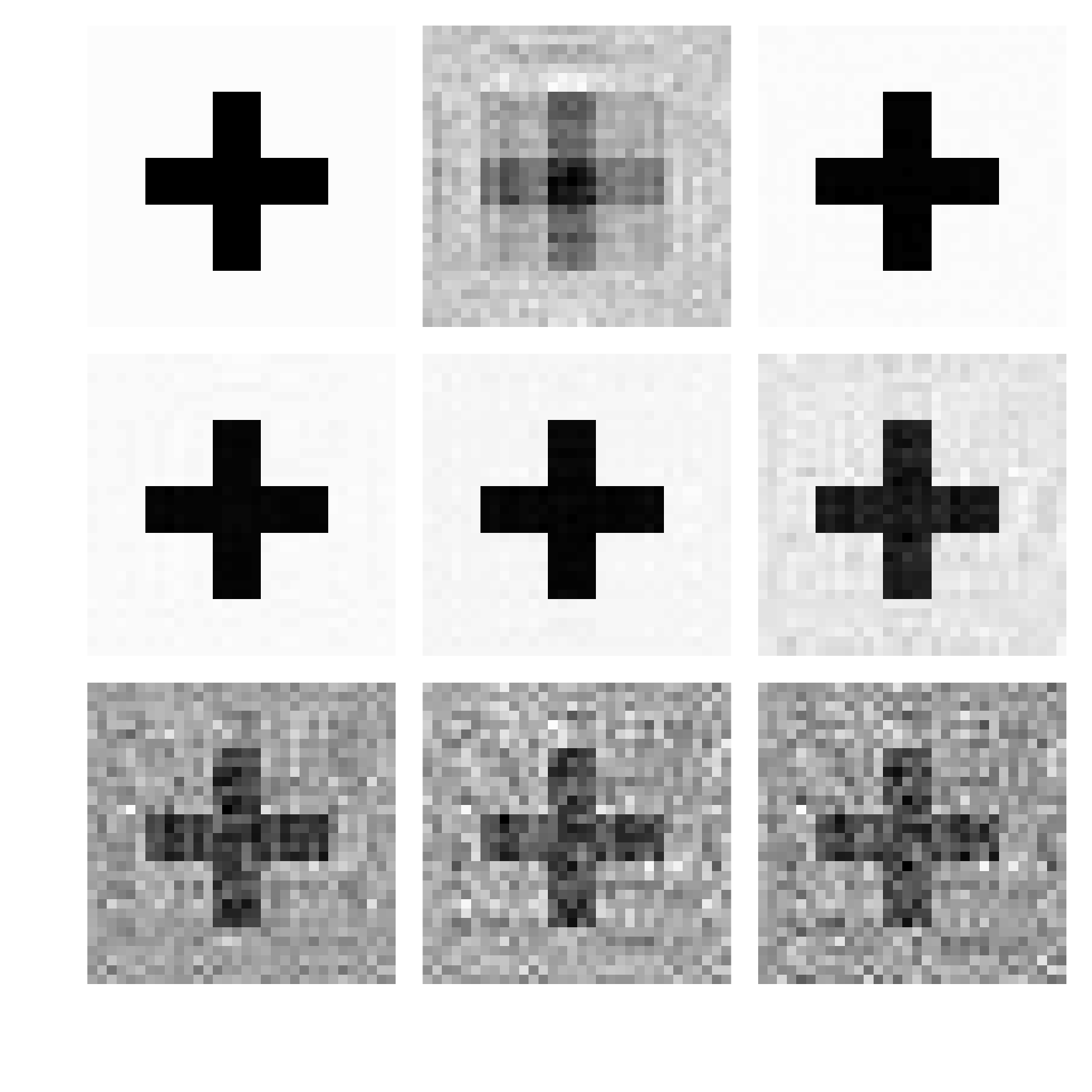}
}
\caption{True $\bB_0$ in the top left of each set of 9 images has rank $2$. The other $8$ images in (a)---(c) display solutions as $\epsilon$
varies over the set $\{0, 0.1,\ldots,0.7\}$.
Figure (a) applies our MM algorithm with sparsity rather than rank constraints
to illustrate how failing to account for matrix structure misses the true signal; Zhou and Li \cite{zhou2014} report similar findings comparing spectral regularization to $\ell_1$ regularization. Figure (b) performs spectral shrinkage \cite{zhou2014} and displays solutions under optimal $\lambda$ values via BIC, while (c) uses our MM algorithm restricting $\text{rank}(\mb{B})=2$.  Figure (d) fixes $\epsilon=0.1$ and uses MM with 
$\text{rank}(\mb{B})\in \{1, \ldots, 8\}$ to illustrate robustness to rank  over-specification. } 
\label{fig:hua} 
\end{figure}

For underdetermined matrix regression, we compare to the spectral regularization method developed by Zhou and Li \cite{zhou2014}. We generate their cross-shaped $32\times32$ true signal $\bB_{\!0}$ and in all trials sample $m=300$ responses $y_i \sim N[ \tr(\bX_i^t, \bB), \epsilon]$. Here the design tensor $\bX$ is generated with standard normal entries.  Figure \ref{fig:hua} demonstrates that imposing sparsity alone fails to recover $\bY_{\!0}$ and that rank-set projections visibly outperform spectral norm shrinkage as $\epsilon$ varies. The rightmost panel also shows that our method is robust to over-specification of the rank of the true signal to an extent. 

We consider two real datasets. We apply our method to count data of global temperature anomalies relative to the 1961-1990 average, collected by the Climate Research Unit \cite{jones2016}. We assume a non-decreasing solution, illustrating an instance of isotonic regression.
The fitted solution displayed in Figure \ref{fig:temp} has mean squared error $0.009$, clearly obeys the isotonic constraint, and is consistent with that obtained on a previous version of the data \cite{wu2001}. We next focus on rank constrained matrix regression for electroencephalography (EEG) data, collected by \cite{zhang1995} to study the association between alcoholism and voltage patterns over times and channels. 
The study consists of $77$ individuals with alcoholism and $45$ controls, providing $122$ binary responses $y_i$ indicating whether subject $i$ has alcoholism. The EEG measurements are contained in $256 \times 64$ predictor matrices $\bX_i$; the dimension $m$ is thus greater than $16,000$. Further details about the data appear in the Supplement.  

Previous studies apply dimension reduction \cite{li2010} and propose algorithms to seek the optimal rank $1$ solution \cite{hung2013}. These methods could not handle the size of the original data directly, and the spectral shrinkage approach proposed in \cite{zhou2014} is the first to consider the full EEG data.
Figure \ref{fig:temp} shows that our regression solution is qualitatively similar to that obtained under nuclear norm penalization \cite{zhou2014}, revealing similar time-varying patterns among channels 20-30 and 50-60.
In contrast, ignoring matrix structure and penalizing the $\ell_1$ norm of $\bB$ yields no useful information, consistent with findings in \cite{zhou2014}. However, our distance penalization approach achieves a lower misclassification error of $0.1475$. The lowest misclassification rate reported in previous analyses is $0.139$ by \cite{hung2013}. As their approach is strictly more restrictive than ours in seeking a rank $1$ solution, we agree with \cite{zhou2014} in concluding that the lower misclassification error can be largely attributed to benefits from data preprocessing and dimension reduction. While not visually distinguishable, we also note that shrinking the eigenvalues via nuclear norm penalization \cite{zhou2014} fails to produce a low-rank solution on this dataset.

We omit detailed timing comparisons throughout since the various methods were run across platforms and depend heavily on implementation. We note that MCP regression relies on the MM principle, and the LQA and LLA algorithms used to fit models with SCAD penalties are also instances of MM algorithms \cite{fan2010}. Almost all MM algorithms share an overall linear rate of convergence. While these require several seconds of compute time on a standard laptop machine, coordinate-descent implementations of LASSO outstrip our algorithm in terms of computational speed.
Our MM algorithm required 31 seconds to converge on the EEG data, the largest example we considered.
\begin{figure}[t]
\centering
\hspace{-12pt}
\includegraphics[width = .41\textwidth]{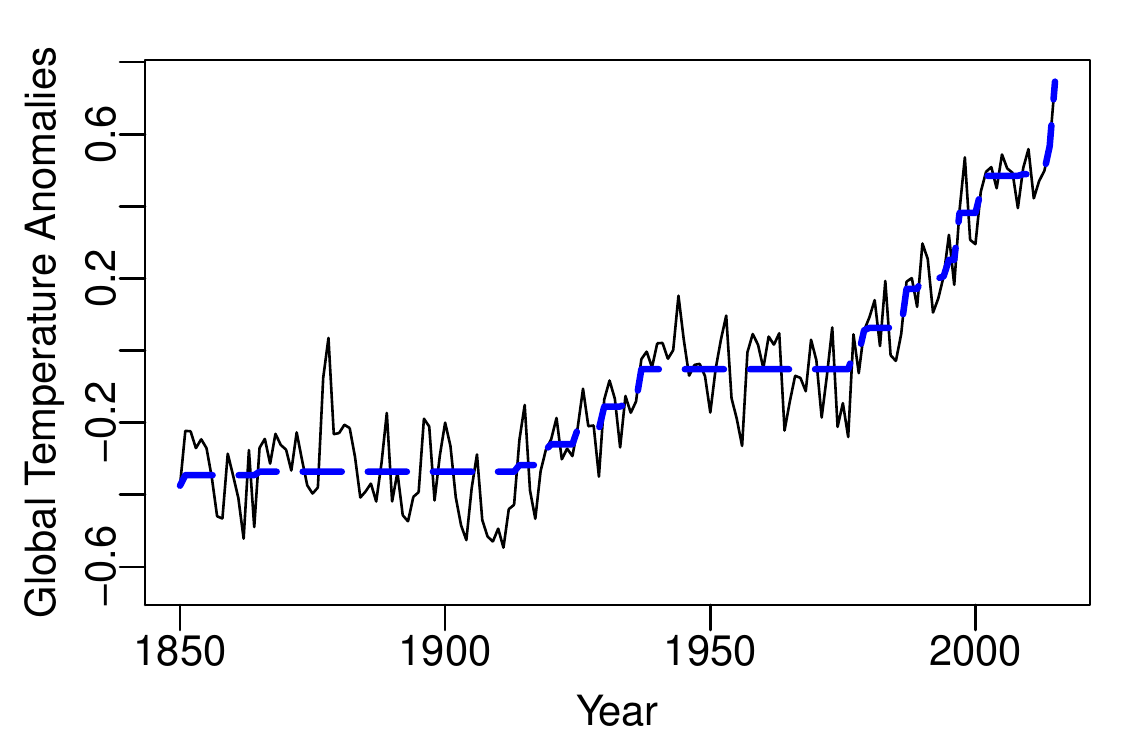}
\hspace{-5pt}
\includegraphics[width = .185\textwidth]{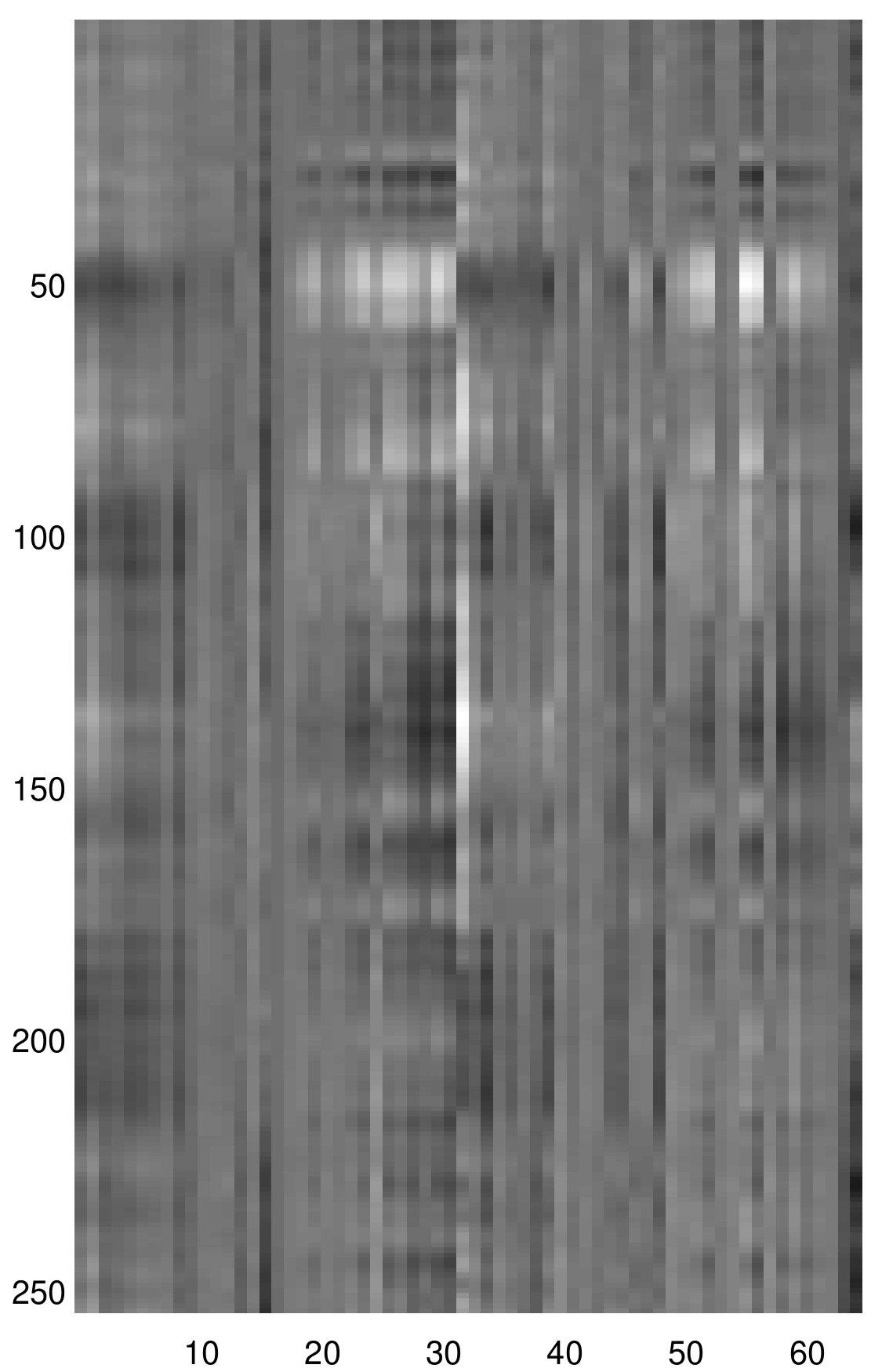}
\includegraphics[width = .20\textwidth]{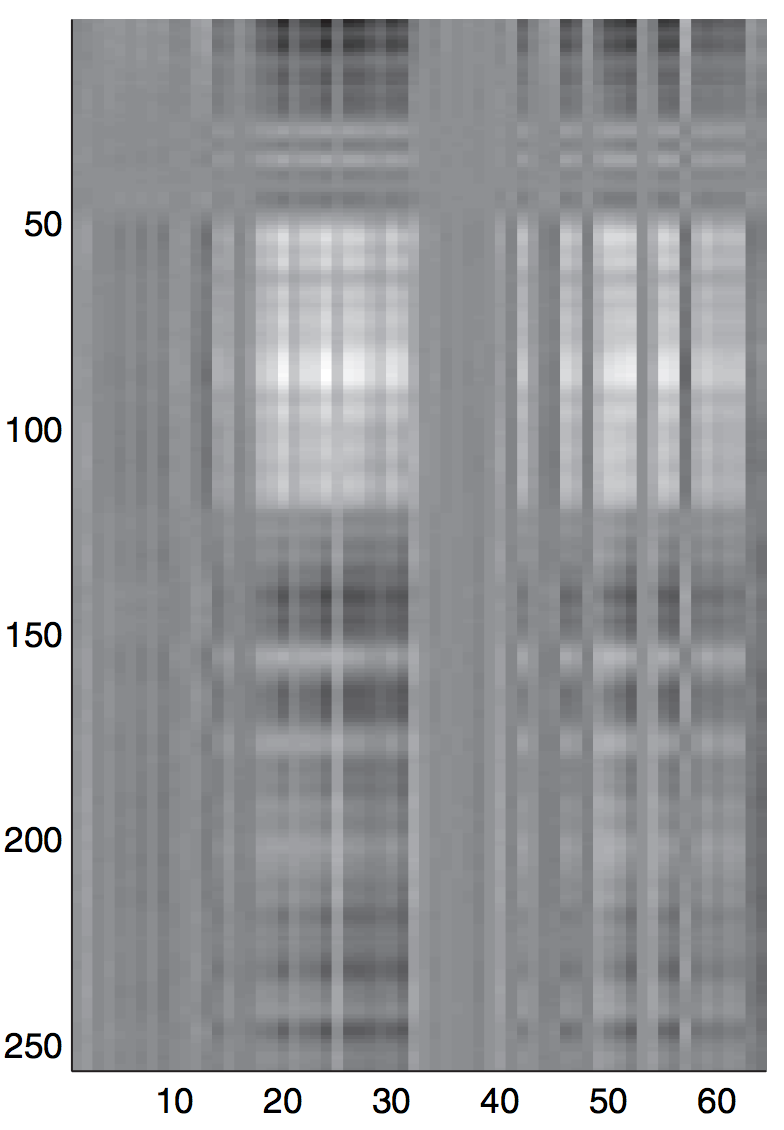}
\includegraphics[width = .203\textwidth]{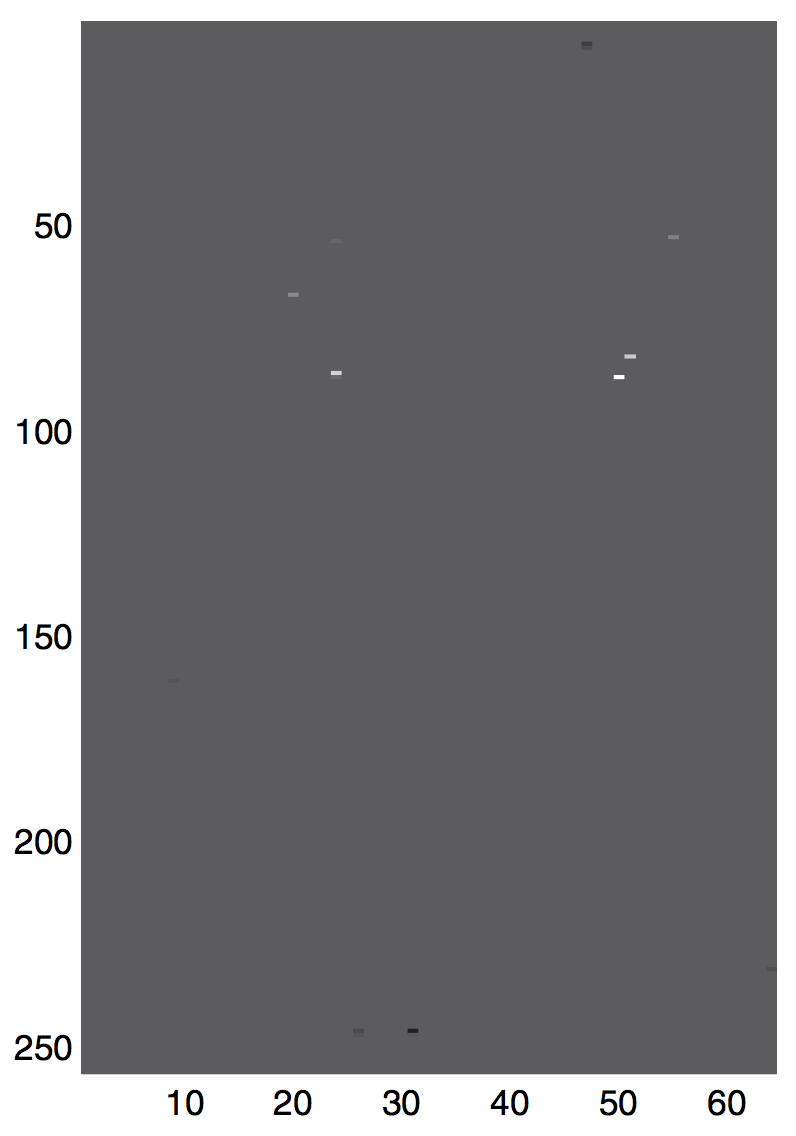}
\vspace{-3pt}
\caption{The leftmost plot shows our isotonic fit to temperature anomaly data \cite{jones2016}.  The right figures display the estimated coefficient matrix $\bB$ on EEG alcoholism data using distance penalization, nuclear norm shrinkage \cite{zhou2014}, and LASSO shrinkage, respectively.} \vspace{-5pt}
\label{fig:temp}
\end{figure}

\section{Discussion}
GLM regression is one of the most widely employed tools in statistics and machine learning. Imposing constraints upon the solution is integral to parameter estimation in many settings.
This paper considers GLM regression under distance-to-set penalties when seeking a constrained solution. Such penalties allow a flexible range of constraints, and are competitive with standard shrinkage methods for sparse and low-rank regression in high dimensions. The MM principle yields a reliable solution method with theoretical guarantees and strong empirical results over a number of practical examples. These examples emphasize promising performance under non-convex constraints, and demonstrate how distance penalization avoids the disadvantages of shrinkage approaches.

Several avenues for future work may be pursued. The primary computational bottleneck we face is matrix inversion, which limits the algorithm when faced with extremely large and high-dimensional datasets. Further improvements may be possible using modifications of the algorithm tailored to specific problems, such as coordinate or block descent variants. Since the linear systems encountered in our parameter updates are well conditioned, a conjugate gradient algorithm may be preferable to direct methods of solution in such cases. The updates within our algorithm can be recast as weighted least squares minimization, and a re-examination of this classical problem may suggest even better iterative solvers. As the methods apply to a generalized objective comprised of multiple Bregman divergences, it will be fruitful to study penalties under alternate measures of distance, and settings beyond GLM regression such as quasi-likelihood estimation. 

While our experiments primarily compare against shrinkage approaches, an anonymous referee points us to recent work revisiting best subset selection using modern advances in mixed integer optimization \cite{bertsimas2016}. These exciting developments make best subset regression possible for much larger problems than previously thought possible. As \cite{bertsimas2016} focus on the linear case, it is of interest to consider how ideas in this paper may offer extensions to GLMs, and to compare the performance of such generalizations.  Best subsets constitutes a gold standard for sparse estimation in the noiseless setting; whether it outperforms shrinkage methods seems to depend on the noise level and is a topic of much recent discussion \cite{hastie2017,mazumder2017}. Finally, these studies as well as our present paper focus on estimation, and it will be fruitful to examine variable selection properties in future work. Recent work evidences an inevitable trade-off between false and true positives under LASSO shrinkage in the linear sparsity regime \cite{su2017}. The authors demonstrate that this need not be the case with $\ell_0$ methods, remarking that computationally efficient methods which also enjoy good model performance would be highly desirable as $\ell_0$ and $\ell_1$ approaches possess one property but not the other \cite{su2017}. Our results suggest that distance penalties, together with the MM principle, seem to enjoy benefits from both worlds on a number of statistical tasks.

\section{Acknowledgements} We would like to thank Hua Zhou for helpful discussions about matrix regression and the EEG data. JX was supported by  NSF MSPRF \#1606177.

\bibliographystyle{myplain}
\bibliography{nips_arxiv}

\newpage
\begin{center}
\LARGE{Supplemental Material}
\end{center}

\section{Proof of Convergence}
We repeat the statement of Theorem 3.1 below:
\begin{theorem}
Consider the algorithm map
\begin{eqnarray*}
\mathcal{M}(\boldsymbol\beta) = \boldsymbol\beta - \eta_{\boldsymbol\beta}\bH(\boldsymbol\beta)^{-1} \nabla f(\boldsymbol\beta),
\end{eqnarray*}
where the step size $\eta_{\boldsymbol\beta}$ has been selected by Armijo backtracking. Assume that $f$ is coercive: $\lim_{\lVert \boldsymbol\beta \rVert \rightarrow \infty} f(\boldsymbol\beta) = + \infty$. Then the limit points of the sequence $\boldsymbol\beta_{k+1} = \mathcal{M}(\boldsymbol\beta_{k})$ are stationary points of $f(\boldsymbol\beta)$. Moreover, this set of limit points is compact and connected.
\end{theorem}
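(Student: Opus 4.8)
The plan is to follow the classical template for descent methods equipped with an Armijo backtracking line search. \emph{Step 1 (compactness from coercivity).} Because the backtracking condition enforces $f(\bbeta_{k+1}) \le f(\bbeta_k)$, the whole sequence lies in the sublevel set $S = \{\bbeta : f(\bbeta) \le f(\bbeta_0)\}$, which is compact since $f$ is continuous and coercive; hence $f$ is bounded below on $\Real^n$, the iterates are bounded, and limit points exist. On $S$ the curvature matrix $\bH(\bbeta) = (\sum_i v_i)\bI_n - \frac{1}{m}\sum_j d^2\Ell(\bbeta\mid y_j,\bx_j)$ is continuous, and since each $-d^2\Ell$ is positive semidefinite (concavity of $\Ell$) it satisfies $(\sum_i v_i)\bI_n \preceq \bH(\bbeta) \preceq C\bI_n$ for a finite constant $C$; thus $\bH(\bbeta)$ is invertible throughout $S$ with $\lVert\bH(\bbeta)^{-1}\rVert \le (\sum_i v_i)^{-1}$, so the map $\mathcal{M}$ is well defined along the trajectory.

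\emph{Step 2 (uniform sufficient decrease).} The search direction $\bv_k = -\bH(\bbeta_k)^{-1}\nabla f(\bbeta_k)$ obeys $\nabla f(\bbeta_k)^t\bv_k \le -C^{-1}\lVert\nabla f(\bbeta_k)\rVert^2$ and $\lVert\bv_k\rVert \le (\sum_i v_i)^{-1}\lVert\nabla f(\bbeta_k)\rVert$, so both $-\nabla f(\bbeta_k)^t\bv_k$ and $\lVert\bv_k\rVert^2$ are sandwiched between fixed positive multiples of $\lVert\nabla f(\bbeta_k)\rVert^2$ over $S$. Since $f$ is $C^1$ with gradient Lipschitz on any compact set --- the log-likelihood term has continuous Hessian by smoothness of $\psi$, and $\bbeta\mapsto\bbeta - P_{C_i}(\bbeta)$ is nonexpansive when $C_i$ is convex --- a descent-lemma estimate shows the halving loop terminates after finitely many steps and the accepted step size $\eta_k$ is bounded below by a positive constant $\eta_{\min}$ independent of $k$. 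Feeding this back into the backtracking inequality yields $f(\bbeta_k) - f(\bbeta_{k+1}) \ge \kappa\lVert\nabla f(\bbeta_k)\rVert^2$ for some $\kappa>0$.

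\emph{Step 3 (stationarity, compactness, connectedness).} Telescoping Step 2 against the finite number $f(\bbeta_0) - \inf_{\bbeta} f$ gives $\sum_k\lVert\nabla f(\bbeta_k)\rVert^2 < \infty$, so $\nabla f(\bbeta_k)\to\mathbf{0}$; by continuity of $\nabla f$ every limit point $\bbeta^\star$ has $\nabla f(\bbeta^\star) = \mathbf{0}$, hence is stationary (and a global minimizer when $f$ is convex). The limit set $\Omega$ is a closed subset of the compact set $S$ and is therefore compact. For connectedness, observe $\lVert\bbeta_{k+1}-\bbeta_k\rVert = \eta_k\lVert\bv_k\rVert \le (\sum_i v_i)^{-1}\lVert\nabla f(\bbeta_k)\rVert \to 0$; a bounded sequence with asymptotically vanishing successive differences has a connected (and compact) set of limit points, by the classical lemma of Ostrowski. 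This completes the argument.

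The main obstacle is the uniform lower bound $\eta_{\min}$ on the Armijo step size in Step 2, which hinges on a Lipschitz modulus for $\nabla f$ over a neighborhood of $S$. For convex constraint sets this is immediate from nonexpansiveness of $\bbeta\mapsto\bbeta - P_{C_i}(\bbeta)$; for nonconvex $C_i$, where the projection may be multivalued or only upper semicontinuous, one must restrict attention to the generic, full-measure region on which each projection is single-valued and locally Lipschitz (or argue directly that the iterates avoid the exceptional set). The remaining ingredients --- the coercivity bookkeeping, the telescoping bound, and Ostrowski's connectedness lemma --- are routine.
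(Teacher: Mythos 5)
Your proposal is correct and follows essentially the same route as the paper's own proof: coercivity gives compact sublevel sets and uniform bounds on $\bH(\bbeta)$ and $\bH(\bbeta)^{-1}$, a descent-lemma estimate (using the $1$-Lipschitz character of the distance-to-set term) yields a uniform lower bound on the Armijo step and hence a sufficient-decrease inequality forcing $\nabla f(\bbeta_k)\to\mathbf{0}$, and the vanishing of $\lVert\bbeta_{k+1}-\bbeta_k\rVert$ together with boundedness gives compactness and connectedness of the limit set via the Ostrowski-type result (Propositions 12.4.2--12.4.3 of Lange). Even your closing caveat about single-valuedness of projections onto nonconvex $C_i$ mirrors the level of care taken in the paper, so no substantive differences remain.
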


Our algorithm selects the step-size $\eta$ according to the Armijo condition: suppose $\bv$ is a descent direction at $\boldsymbol\beta$ in the sense that $df(\boldsymbol\beta)\bv < 0$. The Armijo condition chooses a step size $\eta$ such that
\begin{eqnarray*}
f(\boldsymbol\beta + t\bv) \leq f(\boldsymbol\beta) + \alpha \eta df(\boldsymbol\beta) \bv,
\end{eqnarray*}
for a constant $\alpha \in (0,1)$. 
Before proving the statement, the following lemma follows an argument in Chapter 12 of \cite{Lan2013} to show that step-halving under the Armijo condition requires finitely many steps.
 
\begin{lemma}\label{lemma:backtrack}
Given $\alpha \in (0,1)$ and $\sigma \in (0,1)$, there exists an integer $s \geq 0$ such that
\begin{eqnarray*}
f(\boldsymbol\beta + \sigma^s\bv) \leq f(\boldsymbol\beta) + \alpha \sigma^s df(\boldsymbol\beta) \bv,
\end{eqnarray*}
where $\bv = -\bH(\boldsymbol\beta)^{-1} \nabla f(\boldsymbol\beta)$.
\end{lemma}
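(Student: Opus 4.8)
The plan is to establish the lemma by the standard finite-termination argument for Armijo line search: show that $\bv$ is a strict descent direction at $\bbeta$, then use a first-order Taylor expansion of $f$ to conclude that the sufficient-decrease inequality holds for every small enough step, and finally observe that $\sigma^s$ eventually becomes that small. Coercivity of $f$ plays no role here; the statement is purely local, and coercivity is reserved for the proof of the theorem itself.

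First I would record that the surrogate curvature matrix $\bH(\bbeta) = \big(\sum_i v_i\big)\bI_n - \frac{1}{m}\sum_{j=1}^m d^2\Ell(\bbeta\mid y_j,\bx_j)$ is positive definite. Indeed, by \eqref{eq:score} the observed information $-d^2\Ell$ is positive semidefinite under the canonical link, so $\bH(\bbeta) \succeq \big(\sum_i v_i\big)\bI_n \succ \bZero$ as long as the penalty weights are not all zero. Hence $\bH(\bbeta)^{-1}$ exists and is positive definite, and
\begin{equation*}
df(\bbeta)\bv \;=\; \nabla f(\bbeta)^t\bv \;=\; -\,\nabla f(\bbeta)^t\bH(\bbeta)^{-1}\nabla f(\bbeta) \;\le\; 0,
\end{equation*}
with equality exactly when $\nabla f(\bbeta)=\bZero$. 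In that degenerate case $\bbeta$ is already stationary, $\bv=\bZero$, and the claimed inequality holds trivially with $s=0$; so henceforth I may assume $df(\bbeta)\bv<0$.

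Next I would invoke differentiability of $f$ — the negative log-likelihood is smooth and each term $\frac12\dist(\cdot,C_i)^2$ is differentiable with gradient $\bbeta-P_{C_i}(\bbeta)$ wherever the projection is single-valued — to write $f(\bbeta+t\bv) = f(\bbeta) + t\,df(\bbeta)\bv + o(t)$ as $t\downarrow 0$, so that
\begin{equation*}
\frac{f(\bbeta+t\bv)-f(\bbeta)}{t} \;\longrightarrow\; df(\bbeta)\bv \qquad (t\downarrow 0).
\end{equation*}
Because $\alpha\in(0,1)$ and $df(\bbeta)\bv<0$, we have $\alpha\,df(\bbeta)\bv > df(\bbeta)\bv$, so there exists $\bar t>0$ with $\frac{f(\bbeta+t\bv)-f(\bbeta)}{t}\le \alpha\,df(\bbeta)\bv$, i.e. $f(\bbeta+t\bv)\le f(\bbeta)+\alpha t\,df(\bbeta)\bv$, for all $t\in(0,\bar t\,]$. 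Since $\sigma\in(0,1)$, the sequence $\sigma^s\downarrow 0$, so any integer $s$ with $\sigma^s\le\bar t$ satisfies the asserted inequality, completing the proof.

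The argument is essentially routine; the only step demanding any care is verifying the descent property $df(\bbeta)\bv<0$, which is where the positive definiteness of $\bH(\bbeta)$ (guaranteed by the Levenberg–Marquardt-type shift $\big(\sum_i v_i\big)\bI_n$ together with concavity of $\Ell$) is used, and where the degenerate case $\nabla f(\bbeta)=\bZero$ must be separated out. I would flag this explicitly so that the subsequent global convergence proof can take for granted both that the backtracking loop in Algorithm~\ref{alg:breg} terminates and that it produces a strict decrease in $f$ whenever $\bbeta$ is not stationary.
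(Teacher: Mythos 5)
Your proof is correct for the lemma as literally stated, but it takes a genuinely different, and strictly weaker, route than the paper. You argue locally: $\bH(\bbeta)\succ 0$ makes $\bv$ a descent direction, the definition of the derivative gives $f(\bbeta+t\bv)=f(\bbeta)+t\,df(\bbeta)\bv+o(t)$, and since $\alpha<1$ the Armijo inequality holds for all sufficiently small $t$, hence for some $\sigma^s$. The paper instead uses coercivity to get a compact sublevel set $\mathcal{S}_f(\bbeta_0)$, uniform bounds $\lVert\bH^{-1}\rVert\le a$, $\lVert\bH\rVert\le b$, $\lVert d^2\Ell\rVert\le c$, and the $1$-Lipschitz character of $\dist(\cdot,C)$ to obtain a quadratic upper bound $f(\bbeta+\eta\bv)\le f(\bbeta)+\eta\,df(\bbeta)\bv+\tfrac12\eta^2L\lVert\bv\rVert^2$ with $L=1+c$, and then bounds $\lVert\bv\rVert^2\le a^2\lVert\nabla f\rVert^2\le -a^2b\,df(\bbeta)\bv$ to produce an \emph{explicit threshold} $s^\star$, independent of $\bbeta\in\mathcal{S}_f(\bbeta_0)$, beyond which Armijo is guaranteed. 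What the paper's heavier machinery buys is exactly this uniformity: in the proof of Theorem~\ref{prop:convergence} the decrease at iteration $k$ is bounded below by a constant times $\sigma^{s^\star+1}\lVert\nabla f(\bbeta_k)\rVert^2$, which is what forces $\lVert\nabla f(\bbeta_k)\rVert\to 0$. So your closing remarks are the one point to push back on: coercivity is not merely ``reserved for the theorem's proof'' as an unrelated hypothesis, and mere termination of the backtracking loop plus strict decrease is \emph{not} enough for the global convergence argument --- without a uniform lower bound on the accepted step sizes $\sigma^{s_k}$, they could shrink to zero along the iterates and the stationarity conclusion would not follow. Your argument is a fine, more elementary proof of the lemma's literal existence claim, but if it replaced the paper's proof, the quantitative ingredient the theorem relies on would be missing and would have to be re-derived there.
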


\begin{proof}
Since $f$ is coercive by assumption, its sublevel sets are compact. Namely, the set $\mathcal{S}_f(\boldsymbol\beta_0) \equiv \{ \boldsymbol\beta : f(\boldsymbol\beta) \leq f(\boldsymbol\beta_0)\}$ is compact. Smoothness of the GLM likelihood and squared distance penalty ensure continuity of  $\nabla f(\boldsymbol\beta)$ and $\bH(\boldsymbol\beta)$. Together with coercivity, this implies that there exist positive constants $a$ and $b,$ such that
\begin{eqnarray*}
\lVert \bH(\boldsymbol\beta)^{-1} \rVert \leq a; \quad\lVert \bH(\boldsymbol\beta)  \rVert \leq b; \quad \lVert d^2 \Ell(x) \rVert \leq c 
\end{eqnarray*}
for all $\boldsymbol\beta \in \mathcal{S}_f(\boldsymbol\beta_0)$. Together with the fact that the Euclidean distance to a closed set $\text{dist}(\boldsymbol\beta,C)$ is a Lipschitz function with Lipschitz constant 1, we produce the inequality \begin{eqnarray}
\label{eq:A}
f(\boldsymbol\beta + \eta\bv) \leq f(\boldsymbol\beta) + \eta df(\boldsymbol\beta)\bv + \frac{1}{2} \eta^2 L\lVert \bv\rVert^2
\end{eqnarray}
where $L = 1+c$.
The squared term appearing at the end of (\ref{eq:A}) can be bounded by
\begin{eqnarray*}
\lVert \bv \rVert^2 = \lVert \bH(\boldsymbol\beta)^{-1} \nabla f(\boldsymbol\beta) \rVert^2 \leq a^{2} \lVert \nabla f(\boldsymbol\beta) \rVert^2.
\end{eqnarray*}
We next identify a bound for $\lVert \nabla f(\boldsymbol\beta) \rVert^2$:
\begin{equation}
\label{eq:B}
\begin{split}
\lVert \nabla f(\boldsymbol\beta) \rVert^2 & \amp = \amp \lVert \bH(\boldsymbol\beta)^{1/2}\bH(\boldsymbol\beta)^{-1/2} \nabla f(\boldsymbol\beta) \rVert^2 \\
& \amp \leq \amp \lVert \bH(\boldsymbol\beta)^{1/2}\rVert^2 \lVert\bH(\boldsymbol\beta)^{-1/2} \nabla f(\boldsymbol\beta) \rVert^2 \\
& \amp \leq \amp b df(\boldsymbol\beta) \bH(\boldsymbol\beta)^{-1} \nabla f(\boldsymbol\beta) \\
& \amp = \amp -b df(\boldsymbol\beta) \bv.
\end{split}
\end{equation}
Combining inequalities (\ref{eq:A}) and (\ref{eq:B}) yields
\begin{eqnarray*}
f(\boldsymbol\beta + \eta \bv) \leq f(\boldsymbol\beta) + \eta \left (1 - \frac{a^2 b  L}{2}t \right ) df(\boldsymbol\beta) \bv.
\end{eqnarray*}
Thus, the Armijo condition is guaranteed to be satisfied as soon as $s \geq s^\star$, where
\begin{eqnarray*}
s^\star = \frac{1}{\ln \sigma} \ln \left ( \frac{2(1-\alpha)}{a^2 b L} \right ).
\end{eqnarray*}
Of course, in practice a much lower value of $s$ may suffice.

\end{proof}

We are now ready to prove the original theorem:
\begin{proof}
Consider the iterates of the algorithm $\boldsymbol\beta_{k+1} = \mathcal{M}(\boldsymbol\beta_k) = \boldsymbol\beta_{k} + \sigma^{s_k} \bv_{k}$. Since $f(\boldsymbol\beta)$ is continuous, $f$ attains its infimum over $\mathcal{S}_f(\boldsymbol\beta_0)$, and therefore the monotonically decreasing sequence $f(\boldsymbol\beta_{k})$ is bounded below. This implies that $f(\boldsymbol\beta_{k}) - f(\boldsymbol\beta_{k+1})$ converges to 0. Let $s_k$ denote the number of backtracking steps taken at the $k$th iteration under the Armijo stopping rule. By Lemma \ref{lemma:backtrack}, $s_k$ is finite, and thus
\begin{eqnarray*}
f(\boldsymbol\beta_{k}) - f(\boldsymbol\beta_{k+1}) & \geq & - \alpha \sigma^{s_k} df(\boldsymbol\beta_{k}) \bv_{k} \\
& = & \alpha \sigma^{s_k} df(\boldsymbol\beta_{k}) \bH(\boldsymbol\beta_{k})^{-1}\nabla f(\boldsymbol\beta) \\
& \geq & \frac{\alpha\sigma^{s_k}}{\beta} \lVert \nabla f(\boldsymbol\beta_{k}) \rVert^2 \\
& \geq & \frac{\alpha\sigma^{s^\star + 1}}{\beta} \lVert \nabla f(\boldsymbol\beta_{k}) \rVert^2. \\
\end{eqnarray*}
This inequality implies that $\lVert \nabla f(\boldsymbol\beta_{k}) \rVert$ converges to 0, and therefore all the limit points of the sequence $\boldsymbol\beta_{k}$
are stationary points of $f(\boldsymbol\beta)$.
Further, taking norms of the update yields the inequality 
\begin{eqnarray*}
\lVert \boldsymbol\beta_{k+1} - \boldsymbol\beta_{k} \rVert & = & \sigma^{s_k} \lVert \bH(\boldsymbol\beta_{k})^{-1} \nabla f(\boldsymbol\beta_{k}) \rVert \\
& \leq & \sigma^{s_k} a \lVert \nabla f(\boldsymbol\beta_{k}) \rVert.
\end{eqnarray*}
Thus, the iterates $\boldsymbol\beta_{k}$ are a bounded sequence such that $\lVert \boldsymbol\beta_{k+1} - \boldsymbol\beta_{k} \rVert$ tends to 0, allowing us to conclude that the limit points form a compact and connected set by Propositions 12.4.2 and 12.4.3 in \cite{Lan2013}.
\end{proof}

\section{Bregman Divergences}
Let $\phi: \Omega \mapsto \R$ be a strictly convex function defined on a convex domain $\Omega\subset \R^n $ differentiable on the interior of $\Omega$.  The Bregman divergence \cite{bregman1967} between $\bu$ and $\bv$ with respect to $\phi$ is defined as
\begin{eqnarray}
\label{eq:bregman}
D_\phi(\bv, \bu) & = & \phi(\bv) - \phi(\bu)- d\phi(\bu)(\bv - \bu).
\end{eqnarray}
Note that the Bregman divergence (\ref{eq:bregman}) is a convex function of its first argument $\bv$, and measures the distance between $\bv$ and a first order Taylor expansion of $\phi$ about $\bu$ evaluated at $\bv$. While the Bregman divergence is not a metric as it is not symmetric in general, it provides a natural notion of directed distance. It is non-negative for all $\bu, \bv$ and equal to zero if and only if $\bv = \bu$. Instances of Bregman divergences abound in statistics and machine learning, many useful measures of closeness.

Recall exponential family distributions takes the canonical form
$$p( y | \theta, \tau ) = C_1(y,\tau) \exp \left\{ \frac{ y \theta - \psi( \theta) }{C_2(\tau)} \right\}. $$ 
Each distribution belonging to an exponential family shares a close relationship to a Bregman divergence, and we may explicitly relate GLMs as a special case using this connection.
Specifically, the conjugate of its cumulant function $\psi$, which we denote $\zeta$, uniquely generates a Bregman divergence $D_\zeta$ that represents the exponential family likelihood up to proportionality \cite{polson2015}. With $g$ denoting the link function, the negative log-likelihood of $y$ can be written as its Bregman divergence to the mean:
\[  - \ln p( y | \theta, \tau) = D_\zeta \left( y , g^{-1} (\theta ) \right) + C(y, \tau). \]
As an example, the cumulant function in the Poisson likelihood is $\psi(x) = e^x$, whose conjugate $\zeta(x)=x \ln x - x$ produces the relative entropy $$D_\zeta(p,q) = p \ln (p/q) - p + q.$$ Similarly, recall that the Bernoulli likelihood in logistic regression has cumulant function $\psi(x) = \ln (1 + \exp(x))$. Its conjugate is given by $\zeta(x) = x \ln  x + (1-x) \ln (1-x)$, and generates $$D_\zeta(p,q) = p \ln\frac{p}{q} + (1-p) \ln\frac{1-p}{1-q}.$$
This relationship implies that maximizing the likelihood in an exponential family is equivalent to minimizing a corresponding Bregman divergence between the data $\by$ and the regression coefficients $\boldsymbol\beta$. Notice this is a different statement than the well-known equivalence between maximizing the likelihood and minimizing the Kullback-Leibler divergence between the empirical and parametrized distributions.
The gradients of the Bregman projection take the form
$$\nabla D_\phi \left (\mathcal{P}^\phi_{C_i}(\boldsymbol\beta_{k}) , \boldsymbol\beta \right)  =  d^2 \phi(\boldsymbol\beta) \left (\boldsymbol\beta - \mathcal{P}^\phi_{C_i}(\boldsymbol\beta_{k}) \right ). $$
Further, the notion of Bregman divergence naturally applies to matrices: 
\[ D_\phi(\bV, \bU) =  \phi(\bV) - \phi(\bU)- \langle \nabla\phi(\bU), \bV - \bU \rangle \]
where $\langle \bV, \bU \rangle = \text{Tr}(\bV \bU^T)$ denotes the inner product. For instance, the squared Frobenius distance between $\bV, \bU$ is generated by the choice of $\phi(\bV) = \frac{1}{2} \lVert \bV \rVert_F^2$. 
The MM algorithm therefore applies analogously to objective functions consisting of multiple Bregman divergences.

\section{EEG dataset}
The dataset we consider using rank restricted matrix regression seeks to study the association between alcoholism and the voltage patterns over times and channels from EEG data. The data are collected by \cite{zhang1995}, who provide further details of the experiment, and measures subjects over $120$ trials. The study consists of $77$ individuals with alcoholism and $45$ controls. For each subject, $64$ channels of electrodes were placed across the scalp, and voltages are recorded at 256 time points sampled at 256 Hz over one second. This is repeated over $120$ trials with three different stimuli. Following the practice of previous studies of the data by \cite{li2010, hung2013, zhou2014}, we consider covariates $\mb{X}$ representing the average over all trials of voltages recorded from each electrode. Other than averaging over trials, no data preprocessing is applied. $\mb{X}$ is thus a $256\times 64$ matrix whose $ij$th entries represent the voltage at time $i$ in channel or electrode $j$, averaged over the 120 trials.
The binary responses $y_i$ indicate whether subject $i$ has alcoholism. 

As mentioned in the main text, the study by \cite{li2010} focuses on reduction of the data via dimension folding, and the matrix-variate logistic regression algorithm proposed by \cite{hung2013} is also applied to preprocessed data using a generic dimension reduction technique. The nuclear norm shrinkage proposed by \cite{zhou2014} is the first to consider matrix regression on the full, unprocessed data (apart from averaging over the 120 trials). The authors \cite{zhou2014} point out that previous methods nonetheless attain better classification rates, likely due to the fact that preprocessing and tuning were chosen to optimize predictive accuracy. Indeed, the lowest misclassification rate reported in previous analyses is $0.139$ by \cite{hung2013}, yet the authors show that their method is equivalent to seeking the best rank $1$ approximation to the true coefficient matrix in terms of Kullback-Leibler divergence. Since this approach is strictly more restrictive than ours, which attains an error of $0.1475$, we agree with \cite{zhou2014} in concluding that the lower misclassification error achieved by previous studies can be largely attributed to benefiting from removal of noise via data preprocessing and dimension reduction.

\section{Additional Gaussian regression comparison}
We consider an analogous simulation study including a comparison to the two-stage relaxed LASSO procedure, implemented in R package \texttt{relaxo}. The author's implementation is limited to the Gaussian case, and we consider linear regression with dimension $n=2000$ as the number of samples $m$ varies, with $k=12$ nonzero true coefficients.   We consider a reduced experiment due to runtime considerations of \texttt{relaxo}, repeating only over $20$ trials and varying $m$ by increments of $200$. Though timing is heavily dependent on implementations, the average total runtimes (across all values of $m$) of the experiment across trials for MM, MCP, SCAD, and relaxed lasso are $96.8, 137.5, 107.3,  4876.6$ seconds, respectively.  
We see that relaxed LASSO is effective toward removing the bias induced by standard LASSO, and overall results are similar to those included in the main text. 

\begin{figure}[hbbp]
\centering
\includegraphics[width = .85\textwidth]{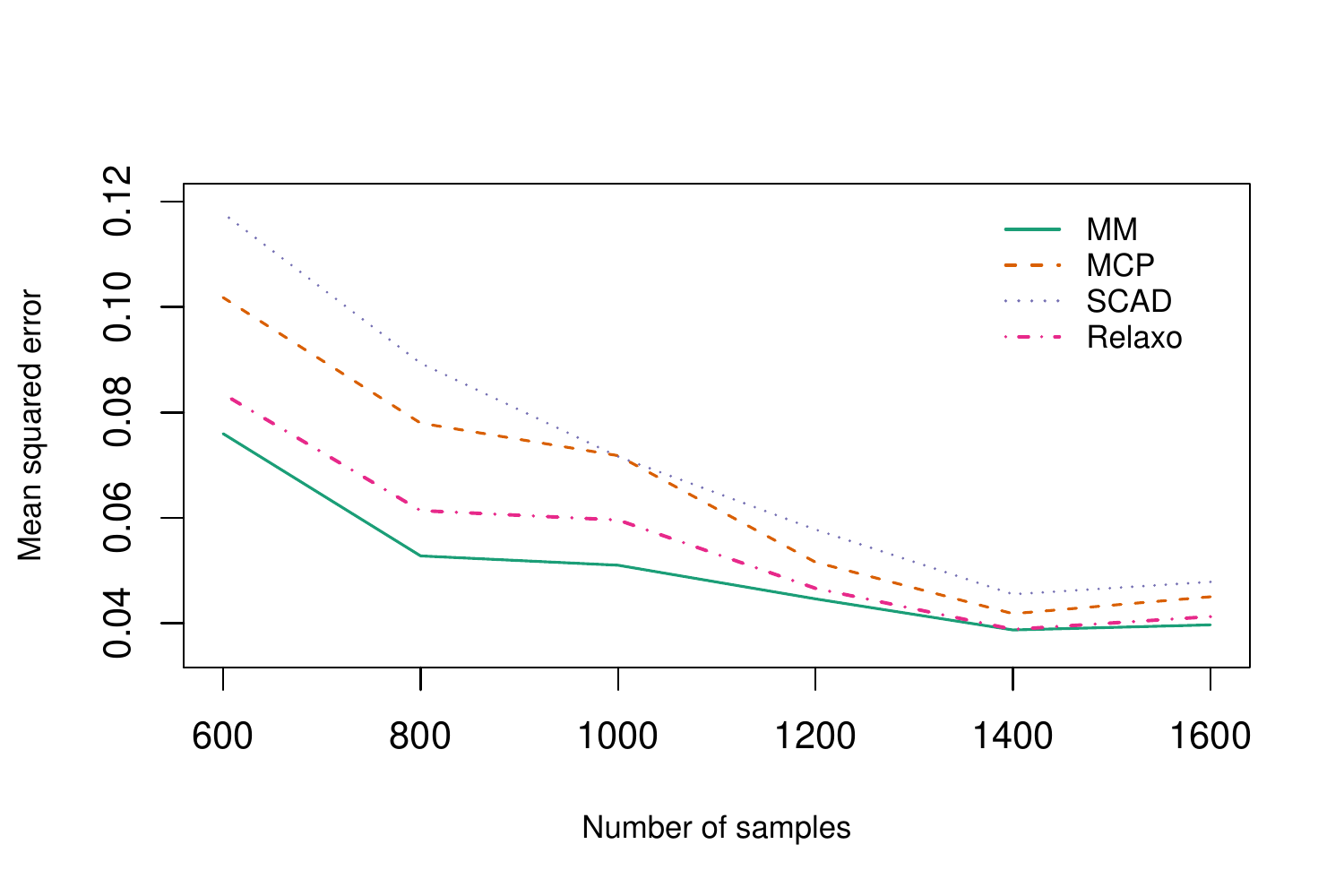}
\vspace{-3pt}
\caption{Median MSE over $20$ trials as a function of the number of samples $m$ in linear regression under our MM approach, the two-stage relaxed LASSO procedure, SCAD and MCP.} 
\vspace{-5pt}
\label{fig:linear}
\end{figure}

\end{document}